\def\eqref#1{equation~\ref{#1}}
\def\1{\bm{1}}
\def\vone{{\bm{1}}}
\def\vmu{{\bm{\mu}}}
\def\vtheta{{\bm{\theta}}}
\def\vgamma{{\bm{\gamma}}}
\def\va{{\bm{a}}}
\def\vd{{\bm{d}}}
\def\vh{{\bm{h}}}
\def\vw{{\bm{w}}}
\def\vx{{\bm{x}}}
\def\vy{{\bm{y}}}
\def\vz{{\bm{z}}}
\def\mA{{\bm{A}}}
\def\mD{{\bm{D}}}
\def\mH{{\bm{H}}}
\def\mI{{\bm{I}}}
\def\mM{{\bm{M}}}
\def\mW{{\bm{W}}}
\def\mX{{\bm{X}}}
\def\mY{{\bm{Y}}}
\def\mZ{{\bm{Z}}}
\DeclareMathAlphabet{\mathsfit}{\encodingdefault}{\sfdefault}{m}{sl}
\SetMathAlphabet{\mathsfit}{bold}{\encodingdefault}{\sfdefault}{bx}{n}
\def\gG{{\mathcal{G}}}
\def\gL{{\mathcal{L}}}
\def\gN{{\mathcal{N}}}
\def\gO{{\mathcal{O}}}
\def\gS{{\mathcal{S}}}
\def\gT{{\mathcal{T}}}
\def\gY{{\mathcal{Y}}}
\def\gZ{{\mathcal{Z}}}
\def\sC{{\mathbb{C}}}
\def\sN{{\mathbb{N}}}
\def\sP{{\mathbb{P}}}
\newcommand{\Ls}{\mathcal{L}}
\newcommand{\R}{\mathbb{R}}
\DeclareMathOperator*{\argmax}{arg\,max}
\DeclareMathOperator{\sign}{sign}
\DeclareMathOperator{\diag}{diag}
\theoremstyle{plain}
\newtheorem{theorem}{Theorem}[section]
\newtheorem{proposition}[theorem]{Proposition}
\newtheorem{lemma}[theorem]{Lemma}
\theoremstyle{definition}
\newtheorem{definition}[theorem]{Definition}
\newtheorem{assumption}[theorem]{Assumption}
\theoremstyle{remark}
\newtheorem{remark}[theorem]{Remark}
\newcommand{\algname}{\texttt{Matcha}}
\newcommand{\gammanickname}{hop-aggregation parameter}
\newcommand{\gammanicknames}{hop-aggregation parameters}
\newcommand{\acc}{\text{Acc}}
\newcommand{\meansd}[2]{#1 $\pm$ #2}
\newcommand{\bmeansd}[2]{\textbf{#1} $\pm$ \textbf{#2}}
\renewcommand{\paragraph}[1]{\textbf{#1.}\ \ }
\newcommand{\rev}[1]{#1}
\newcommand{\zn}[1]{}
\newcommand{\bao}[1]{}
\newcommand{\zc}[1]{}
\newcommand{\he}[1]{}
\newcommand{\hh}[1]{}
\title{{\algname}: Mitigating Graph Structure Shifts with Test-Time Adaptation}
\author{Wenxuan Bao$^1$, Zhichen Zeng$^1$, Zhining Liu$^1$, Hanghang Tong$^1$, Jingrui He$^1$ \\
$^1$University of Illinois Urbana-Champaign \\
\texttt{\{wbao4,zhichenz,liu326,htong,jingrui\}@illinois.edu}
}
\begin{document}

\maketitle

\addtocontents{toc}{\protect\setcounter{tocdepth}{0}}  

\begin{abstract}
    Powerful as they are, graph neural networks (GNNs) are known to be vulnerable to distribution shifts. Recently, test-time adaptation (TTA) has attracted attention due to its ability to adapt a pre-trained model to a target domain, without re-accessing the source domain. However, existing TTA algorithms are primarily designed for attribute shifts in vision tasks, where samples are independent. These methods perform poorly on graph data that experience structure shifts, where node connectivity differs between source and target graphs. We attribute this performance gap to the distinct impact of node attribute shifts versus graph structure shifts: the latter significantly degrades the quality of node representations and blurs the boundaries between different node categories. To address structure shifts in graphs, we propose {\algname}, an innovative framework designed for effective and efficient adaptation to structure shifts by adjusting the {\gammanicknames} in GNNs. To enhance the representation quality, we design a prediction-informed clustering loss to encourage the formation of distinct clusters for different node categories. Additionally, {\algname} seamlessly integrates with existing TTA algorithms, allowing it to handle attribute shifts effectively while improving overall performance under combined structure and attribute shifts. We validate the effectiveness of {\algname} on both synthetic and real-world datasets, demonstrating its robustness across various combinations of structure and attribute shifts. Our code is available at \url{https://github.com/baowenxuan/Matcha}. 
\end{abstract}

\section{Introduction}

Graph neural networks (GNNs) have shown great success in various graph applications such as social networks~\citep{twitch}, scientific literature networks~\citep{ogb}, and financial fraud detection~\citep{elliptic}.
Their success heavily relies on the assumption that training and testing graphs are identically distributed \citep{graphood_survey}. However, real-world graphs usually involve distribution shifts in both node attributes and graph structures \citep{strurw,graphgp,grade}.
For example, given two social networks (e.g., LinkedIn for professional networking and Instagram for casual content sharing), the user profiles are likely to vary due to the different functionalities of two graphs, resulting in \textit{attribute shifts}. Besides, as LinkedIn users tend to connect with professional colleges, while users on Instagram often connect with family and friends, the connectivity patterns vary across different networks, introducing \textit{structure shifts}. The co-existence of these complex shifts significantly undermines GNN model performance~\citep{graphood_survey}. 

Various approaches have been proposed to tackle distribution shifts between the source and target domains, e.g., domain adaptation~\citep{da-survey} and domain generalization~\citep{dg-survey}. But most of these approaches require access to either target labels~\citep{graphgp,grade} or the source domain during adaptation~\citep{strurw,adagin}, which is often impractical in real-world applications. For example, when a model is deployed for fraud detection, the original transaction data used for training may no longer be accessible. 
Test-time adaptation (TTA) has emerged as a promising solution, allowing models to adapt to an unlabeled target domain without re-accessing the source domain~\citep{tta_survey}. These algorithms demonstrate robustness against various image corruptions and style shifts in vision tasks~\citep{tent,t3a,adanpc}. 
However, applying TTA to graph data presents significant challenges, especially under structure shifts. As shown in Figure \ref{fig:cover}, both attribute and structure shifts (e.g., homophily and degree shifts) lead to performance drops on target graphs, but current TTA methods provide only marginal accuracy improvements under structure shifts compared to attribute shifts.

\begin{figure}
    \centering
    \begin{minipage}[t]{0.44\linewidth}
        \centering
        \includegraphics[width=0.9\linewidth]{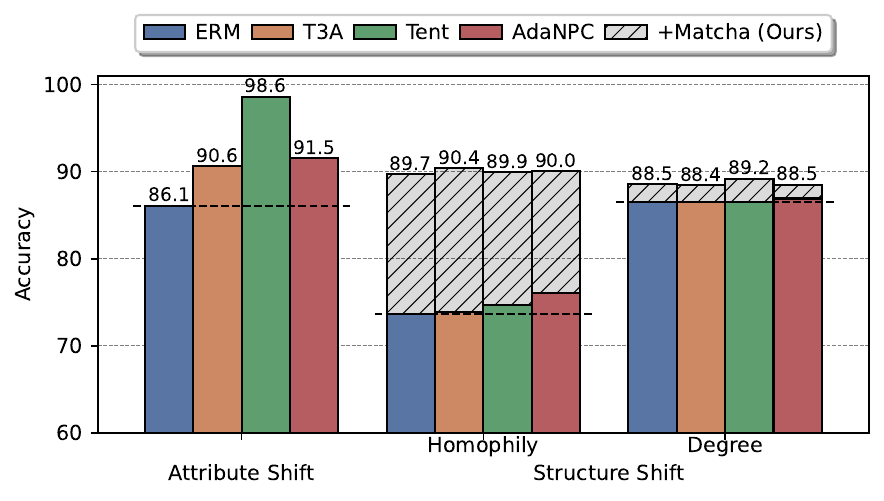}
        \vspace{-2ex}
        \caption{Generic TTA algorithms (T3A, Tent, AdaNPC) are significantly less effective under structure shifts (right) than attribute shifts (left). On the contrary, our proposed {\algname} could significantly improve the performance of generic TTA (gray shaded area). The dataset used is CSBM. }
        \label{fig:cover}
    \end{minipage}
    \hspace{0.02\linewidth}
    \begin{minipage}[t]{0.52\linewidth}
        \centering
        \includegraphics[width=0.9\linewidth]{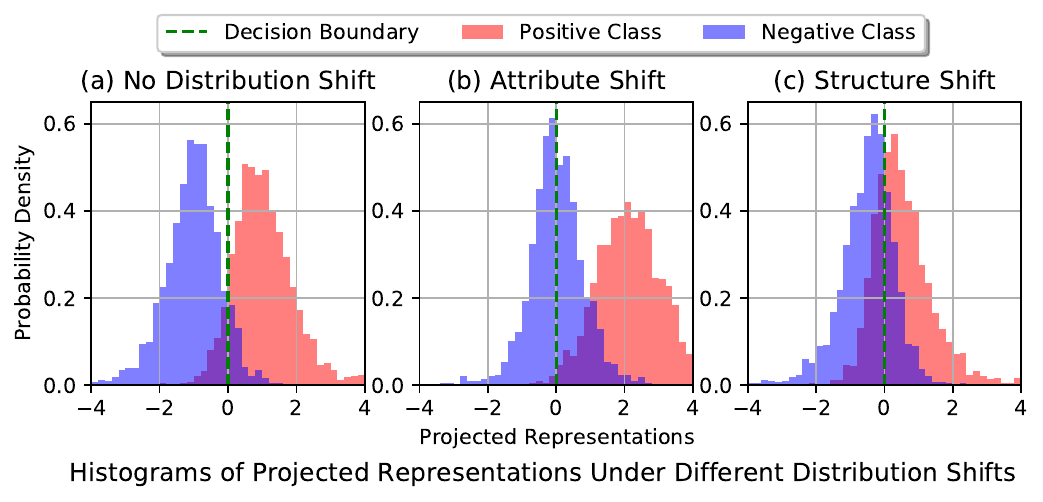}
        \vspace{-2ex}
        \caption{Attribute shifts and structure shifts have different impact patterns. Compared to attribute shifts (b), structure shifts (c) mix the distributions of node representations from different classes, which cannot be alleviated by adapting the decision boundary. This explains the limitations of existing generic TTA algorithms. The dataset used is CSBM.}
        \label{fig:pattern}
    \end{minipage}
    \vspace{-3ex}
\end{figure}

In this paper, we seek to understand why generic TTA algorithms perform poorly under structure shifts. Through both theoretical analysis and empirical evaluation, we reveal that while both attribute and structure shifts affect model accuracy, they impact GNNs in different ways. Attribute shifts mainly affect the decision boundary and can often be addressed by adapting the downstream classifier. In contrast, structure shifts degrade the upstream featurizer, causing node representations to mix and become less distinguishable, which significantly hampers performance. Figure \ref{fig:pattern} illustrates this distinction. Since most generic TTA algorithms rely on high-quality representations~\citep{t3a,tent,adanpc}, they struggle to improve GNN performance under structure shifts.

To address these limitations, we propose that the key to mitigating structure shifts lies in restoring the quality of node representations, making the representations of different classes distinct again. Guided by theoretical insights, we propose adjusting the {\gammanicknames} which control how GNNs integrate node features with neighbor information across different hops. Many GNN designs include such {\gammanicknames}, e.g., GPRGNN~\citep{gprgnn}, APPNP~\citep{appnp}, JKNet~\citep{jk}, and GCNII~\citep{gcnii}. Building on this, we introduce {\algname}, a framework to Mitigate grAph sTruCture sHifts with test-time Adaptation. It restores representation quality by adapting {\gammanicknames} via minimizing prediction-informed clustering (PIC) loss, promoting discriminative node representations without falling into trivial solutions as with traditional entropy loss. Additionally, our framework can be seamlessly integated with existing TTA algorithms to harness their capability to handle attribute shifts. We empirically evaluate {\algname} with a wide range of datasets and TTA algorithms. Extensive experiments on both synthetic and real-world datasets show that {\algname} can handle a variety of structure shifts, including homophily shifts and degree shifts. Moreover, it is compatible to a wide range of TTA algorithms and is able to enhance their performance under various combinations of attribute shifts and structure shifts. 
We summarize our contributions as follows: 
\begin{itemize}[leftmargin=*,topsep=0pt,itemsep=-1pt]
    \item \textbf{Theoretical analysis} reveals the distinct impact patterns of attribute and structure shifts on GNNs, which limits the effectiveness of generic TTA methods in graphs. Compared to attribute shifts, structure shifts more significantly impair the node representation quality. 
    \item \textbf{A novel framework} {\algname} is proposed to restore the quality of node representations and boost existing TTA algorithms by adjusting the {\gammanicknames}.
    \item \textbf{Empirical evaluation} on both synthetic and real-world scenarios demonstrates the effectiveness of \algname\ under various distribution shifts. When applied alone, \algname\ enhances the source model performance by up to 31.95\%. When integrated with existing TTA methods, \algname\ further boosts their performance by up to 40.61\%. 
\end{itemize}

\section{Related works} \label{sec:related_works}

\textbf{Test-time adaptation} (TTA) aims to adapt a pre-trained model from the source domain to an unlabeled target domain without re-accessing the source domain during adaptation~\citep{tta_survey}. For i.i.d. data like images, several recent works propose to perform image TTA by entropy minimization~\citep{tent, memo}, pseudo-labeling~\citep{t3a,adanpc}, consistency regularization~\citep{lame}, etc. However, graph TTA is more challenging due to the co-existence of attribute shifts and structure shifts. To address this issue, GTrans~\citep{gtrans} proposes to refine the target graph at test time by minimizing a surrogate loss. SOGA~\citep{soga} maximizes the mutual information between model inputs and outputs, and encourages consistency between neighboring or structurally similar nodes, but it is only applicable to homophilic graphs. \rev{Focusing on degree shift, GraphPatcher~\citep{graphpatcher} learns to generate virtual nodes to improve the prediction on low-degree nodes.} In addition, GAPGC~\citep{gapgc} and GT3~\citep{gt3} follow a self-supervised learning (SSL) scheme to fine-tune the pre-trained model for graph classification.

\textbf{Graph domain adaptation} (GDA) aims to transfer knowledge from a labeled source graph to an unlabeled target graph with access to {\em both} graphs. Most of the GDA algorithms focus on learning invariant representations over the source and target graphs by adversarial learning~\citep{dane,uda-gcn,adagin} or minimizing the distance between source and target graphs~\citep{srgnn,grade}. More recent works~\citep{strurw,pairalign} address the co-existence of structure and node attribute shifts by reweighing the edge weights of the source graphs. However, GDA methods require simultaneous access to both the source and target graphs, and thus cannot be extended to TTA scenarios.

We also discuss related works in (1) graph out-of-distribution generalization and (2) homophily-adaptive GNN models in Appendix \ref{appendix:related_works}.

\section{Analysis}
\label{sec:analysis}

In this section, we explore how different types of distribution shifts affect GNN performance. 
We first introduce the concepts of attribute shifts and structure shifts in Subsection \ref{subsec:distribution_shift}. 
Subsequently, in Subsection \ref{subsec:impact}, we analyze how attribute shifts and structure shifts affect the GNN performance in different ways, which explain the limitation of generic TTA methods. Finally, in Subsection \ref{subsec:adapt}, we propose that adapting the {\gammanicknames} can effectively handle structure shifts.

\subsection{Preliminaries}
\label{subsec:distribution_shift}

In this paper, we focus on graph test-time adaptation (GTTA) for node classification. A labeled source graph is denoted as $\gS = (\mX_S, \mA_S)$ with node attribute matrix $\mX_S\in\mathbb{R}^{N\times D}$ and adjacency matrix $\mA_S\in\{0, 1\}^{N\times N}$. The corresponding node label matrix is denoted as $\mY_S\in\{0, 1\}^{N\times C}$. For a node $v_i$, we denote its neighbors as $\sN(v_i)$ and node degree as $d_i$. A GNN model $g_S \circ f_S(\cdot)$ is pre-trained on the source graph, where $f_S$ is the featurizer extracting node-level representations, and $g_S$ is the classifier, which is usually a linear layer. The goal of GTTA is to adapt the pre-trained GNN model to enhance node classification accuracy on an unlabeled target graph $\gT = (\mX_T, \mA_T)$ with a different distribution, while the source graph $\gS$ are not accessible during adaptation.\footnote{This setting is also referred to as \textit{source-free unsupervised graph domain adaptation} \citep{soga}. Here, we primarily follow the terminology used by \citet{gtrans}. It is important to note that, unlike the online setting often adopted in image TTA, graph TTA allows simultaneous access to the entire unlabeled target graph $\gT$ \citep{tta_survey}.}

Compared with TTA on regular data like images, GTTA is more challenging due to the co-existence of \textit{attribute shifts} and \textit{structure shifts} \citep{graphood_survey,grade}, which are formally defined as follows \citep{strurw}.

\paragraph{Attribute shift} 
We assume that the node attributes $\vx_i$ for each node $v_i$ (given its label $\vy_i$) are i.i.d. sampled from a class-conditioned distribution $\sP_{\vx | \vy}$. The attribute shift is defined as $\sP_{\vx | \vy}^\gS \neq \sP_{\vx | \vy}^\gT$.

\paragraph{Structure shift} 
We consider the joint distribution of adjacency matrix and labels $\sP_{\mA \times \mY}$. The structure shift is defined as $\sP_{\mA\times\mY}^\gS \neq \sP_{\mA\times\mY}^\gT$. Specifically, we focus on two types of structure shifts: \textit{degree shift} and \textit{homophily shift}. 

\paragraph{Degree shift} 
Degree shift refers to the difference in degree distribution, particularly the average degree, between the source graph and the target graph. For instance, in the context of a user co-purchase graph, in more mature business regions, the degree of each user node may be relatively higher due to multiple purchases on the platform. However, when the company expands its operations to a new country where users are relatively new, the degree may be comparatively lower.

\paragraph{Homophily shift}
Homophily refers to the phenomenon that a node tends to connect with nodes with the same labels. Formally, the \textit{node homophily} of a graph $\mathcal{G}$ is defined as \citep{geomgcn}:
\begin{align} 
    h(\mathcal{G}) = \frac{1}{N}\!\sum_{i} h_i, \quad \text{where } h_i = \frac{|\{v_j \!\in\! \sN(v_i): y_{j} \!=\! y_{i}\}|}{d_i},
    \label{eq:homophily}
\end{align}
where $|\cdot|$ denotes the cardinality of a set. Homophily shift refers to the phenomenon that the source and target graphs have different levels of homophily. For example, with node labels as occupation, business social networks (e.g., LinkedIn) are likely to be more homophilic than other social networks (e.g., Pinterest, Instagram).

Although structure shifts do not directly change the distribution of each single node's attribute, they change the distribution of each node's neighbors, and thus affects the distribution of node representations encoded by GNNs.


\subsection{Impacts of distribution shifts}
\label{subsec:impact}

As observed in Figure \ref{fig:cover}, both attribute shifts and structure shifts can impact the performance of GNNs. However, the same TTA algorithm demonstrates remarkably different behaviors when addressing these two types of shifts. We posit that this is due to the distinct ways in which attribute shifts and structure shifts affect GNN performance. We adopt the contextual stochastic block model (CSBM) and single-layer GCNs to elucidate these differences. 

\textbf{CSBM} \citep{csbm} is a random graph generator widely used in the analysis of GNNs \citep{necessity,demystify,ggcn}. Specifically, we consider a CSBM with two classes $\sC_+ = \{v_i : y_i = +1\}$ and $\sC_- = \{v_i : y_i = -1\}$, each having $\frac{N}{2}$ nodes. The attributes for each node $v_i$ are independently sampled from a Gaussian distribution $\vx_i \sim \gN(\vmu_i, \mI)$, where $\vmu_i = \vmu_+$ for $v_i \in \sC_+$ and $\vmu_i = \vmu_-$ for $v_i \in \sC_-$. Each pair of nodes are connected with probability $p$ if they are from the same class, otherwise $q$. As a result, the average degree is $d = \frac{N(p + q)}{2}$ and node homophily is $h = \frac{p}{p + q}$. We denote the graph as $\text{CSBM}(\vmu_+, \vmu_-, d, h)$, where $\vmu_+, \vmu_-$ encode the node attributes and $d, h$ encode the graph structure. 

\paragraph{Single-layer GCN}
We consider a single-layer GCN, whose featurizer is denoted as $\mZ \!=\! f(\mX,\! \mA) \!=\! \mX + \gamma \cdot \mD^{-1} \mA \mX = (\mI + \gamma \cdot \mD^{-1} \mA) \mX$, where $\mD$ is the degree matrix. Equivalently, for each node $v_i$, its node representation is $\vz_i = \vx_i + \gamma \cdot \frac{1}{d_i} \sum_{v_j \in \sN(v_i)} \vx_j$. The parameter $\gamma$ controls the mixture between the node's own representation and its one-hop neighbors' average representation.
We consider $\gamma$ as a fixed parameter for now, and adapt it later in Subsection \ref{subsec:adapt}.
We consider a linear classifier as $g(\mZ) = \mZ \vw + \vone b$, which predicts a node $v_i$ as positive if $\vz_i^\top \vw + b \geq 0$ and vise versa. 

In Proposition \ref{prop:node_representation} and Corollary \ref{crl:acc} below, we derive the distribution of node representations $\{\vz_1, \cdots, \vz_N\}$, and give the analytical form of the optimal parameters and expected accuracy. 

\begin{restatable}{proposition}{noderepresentation} \label{prop:node_representation}
    For graphs generated by $\text{CSBM}(\vmu_+, \vmu_-, d, h)$, the node representation $\vz_i$ of node $v_i \in \sC_+$ generated by a single-layer GCN follows a Gaussian distribution of
    \begin{align}
        \vz_i \sim \gN\left( (1 + \gamma h_i ) \vmu_+ + \gamma (1 - h_i) \vmu_-, \left( 1 + \frac{\gamma^2}{d_i} \right) \mI \right),
    \end{align}
    where $d_i$ is the degree of node $v_i$, and $h_i$ is the homophily of node $v_i$ defined in Eq. (\ref{eq:homophily}). Similar results hold for $v_i \in \sC_-$ after swapping $\vmu_+$ and $\vmu_-$. 
\end{restatable}

\begin{restatable}{corollary}{crlacc} \label{crl:acc}
    When $\vmu_+ = \vmu, \vmu_- = -\vmu$, and all nodes have the same homophily $h = \frac{p}{p + q}$ and degree $d = \frac{N(p + q)}{2}$, the classifier maximizes the expected accuracy when $\vw = \sign(1 + \gamma (2h - 1)) \cdot \frac{\vmu}{\|\vmu\|_2}$ and $b = 0$. It gives a linear decision boundary of $\{\vz: \vz^\top \vw = 0\}$ and the expected accuracy 
    \begin{align}
        \acc = \Phi \left(\sqrt{\frac{d}{d + \gamma^2}} \cdot |1 + \gamma (2h - 1)| \cdot \| \vmu \|_2 \right),
    \end{align}
    where $\Phi$ is the CDF of the standard normal distribution. 
\end{restatable}

To analyze the distinct impact patterns of attribute shifts and structure shifts, we decompose the accuracy gap of GNNs between the source graph and the target graph into two parts as follows,
\resizebox{1.0\linewidth}{!}{
    \begin{minipage}{\linewidth}
        \vspace{-1ex}
        \begin{align*}
            \underbrace{\acc_S(g_S \circ f_S) - \acc_T(g_S \circ f_S)\vphantom{\sup_{g_T}}}_{\text{total accuracy gap}} 
            = \underbrace{\acc_S(g_S \circ f_S) - \sup_{g_T} \acc_T(g_T \circ f_S)}_{\text{representation degradation $\Delta_{f}$}} 
            + \underbrace{\sup_{g_T} \acc_T(g_T \circ f_S) - \acc_T(g_S \circ f_S)}_{\text{classifier bias $\Delta_{g}$}},
        \end{align*}
    \end{minipage}
}
where $\acc_S, \acc_T$ denote the accuracies on the source and target graphs, respectively. $\sup_{g_T} \acc_T(g_T \circ f_S)$ is the highest accuracy that a GNN can achieve on the target graph when the featurizer $f_S$ is frozen and the classifier $g_T$ is allowed to adapt. Using this accuracy as a pivot, the accuracy gap is decomposed into representation degradation and classifier bias. A visualized illustration is shown in Figure \ref{fig:gap_decompose}. 
\begin{itemize}[leftmargin=*,topsep=0pt,noitemsep]
    \item \textit{Representation degradation $\Delta_f$} quantifies the performance gap attributed to the suboptimality of the source featurizer $f_S$. Intuitively, this term measures the minimal performance gap between the source and target graphs that the GNN model can achieve by tuning the classifier $g_T$. 
    \item \textit{Classifier bias $\Delta_g$} quantifies the performance gap attributed to the suboptimality of the source classifier $g_S$. Intuitively, this term measures the part of performance gap on the target graph the GNN model can reduce by tuning the classifier $g_T$. 
\end{itemize}

\begin{restatable}[Impacts of attribute shifts]{proposition}{propattribute} \label{prop:attribute}
    When training a single-layer GCN on a source graph of $\text{CSBM}(\vmu, - \vmu, d, h)$, while testing it on a target graph of $\text{CSBM}(\vmu + \Delta \vmu, - \vmu + \Delta \vmu, d, h)$ with $\| \Delta \vmu \|_2 < | \frac{1 + \gamma (2h - 1)}{1+ \gamma}| \cdot \| \vmu \|_2$, we have
    \begin{align}
        \Delta_f = 0, \quad\quad \Delta_g = \Theta(\| \Delta \vmu \|_2^2),
    \end{align}
    where $\Theta$ indicates the same order, i.e., a function $l(x) = \Theta(x) \Leftrightarrow$ there exists positive constants $C_1, C_2$, s.t. $C_1 \leq \frac{l(x)}{x} \leq C_2$ for all $x$ in its range. It implies that the performance gap under attribute shifts mainly attributes to the classifier bias. 
\end{restatable}

\begin{restatable}[Impacts of structure shifts]{proposition}{propstructure} \label{prop:structure}
    When training a single-layer GCN on a source graph of $\text{CSBM}(\vmu, - \vmu, d_S, h_S)$, while testing it on a target graph of $\text{CSBM}(\vmu, - \vmu, d_T, h_T)$, where $1 \leq d_T = d_S - \Delta d < d_S$ and $\frac{1}{2} < h_T = h_S - \Delta h < h_S$, if $\gamma > 0$, we have
    \begin{align}
        \Delta_f = \Theta(\Delta h + \Delta d), \quad\quad \Delta_g = 0,
    \end{align}
    which implies that the performance gap under structure shifts mainly attributes to the representation degradation. 
\end{restatable}

Propositions \ref{prop:attribute} and \ref{prop:structure} imply that attribute shifts and structure shifts impact the accuracy of GNN differently. Specifically, attribute shifts impact the decision boundary of the classifier, while structure shifts significantly degrade the node representation quality. These propositions also match with our empirical findings in Figure \ref{fig:pattern} and Figure \ref{fig:different_shift} (in Appendix \ref{appendix:vis}). Since generic TTA methods \citep{tent,t3a,adanpc} usually rely on the representation quality and refine the decision boundary, their effectiveness is limited under structure shifts. 


\subsection{Adapting {\gammanicknames} to restore representations}
\label{subsec:adapt}

To mitigate the representation degradation caused by structure shifts, it becomes essential to adjust the featurizer of GNNs. In the following Proposition \ref{prop:adapt}, we demonstrate that the degraded node representations due to structure shifts can be restored by adapting $\gamma$, the {\gammanickname}. This is because $\gamma$ determines the way to combine a node's own attributes with its neighbors in GNNs. Notice that although our theory mainly focuses on single-layer GCNs, a wide range of GNN models possess similar parameters for adaptation, e.g., the general PageRank parameters in GPRGNN \citep{gprgnn}, teleport probability in APPNP \citep{appnp}, layer aggregation in JKNet \citep{jk}. We extend our analysis to multi-hop GCNs in Appendix \ref{appendix:multi-hop}. 

\begin{restatable}[Adapting $\gamma$]{proposition}{propadapt} \label{prop:adapt}
    Under the same learning setting as Proposition \ref{prop:structure}, adapting the source $\gamma_S$ to the optimal $\gamma_T = d_T(2 h_T - 1)$ on the target graph can alleviate the representation degradation and improve the target classification accuracy by $\Theta( (\Delta h)^2 + (\Delta d)^2)$. 
\end{restatable}

Proposition \ref{prop:adapt} indicates that the optimal $\gamma$ depends on both node degree $d$ and homophily $h$. \rev{For instance, consider a source graph with $h_S = 1$ and $d_S = 10$. In this case, the optimal featurizer assigns equal weight to the node itself and each of its neighbors, resulting in optimal $\gamma_S = 10$. However, when the target graph's degree remains unchanged but the homophily decreases to $h_T = 0.5$, where each node’s neighbors are equally likely to be positive or negative, the neighbors no longer provide reliable information for node classification, leading to an optimal $\gamma_T = 0$. Similarly, when the homophily remains the same, but the target graph's degree is reduced to $d_T = 1$, $\gamma_S$ overemphasizes the neighbor's representation by placing excessive weight on it, whereas the optimal $\gamma_T$ in this case would be 1. A visualization of these examples are given in Appendix \ref{appendix:vis2}. }

\section{Proposed framework} \label{sec:algo}

So far, we have found that adjusting {\gammanicknames} can address the issue of node representation degradation caused by structure shifts. However, translating this theoretical insight into a practical algorithm still faces two challenges:
\begin{itemize}[leftmargin=*,topsep=0pt,noitemsep]
    \item In the absence of labels, how to update {\gammanicknames} to handle structure shifts?
    \item How to ensure that our proposed algorithm is compatible with existing TTA algorithms (\texttt{BaseTTA}) in order to simultaneously address the co-existence of structure and attribute shifts?
\end{itemize}
In this section, we propose {\algname}, including a novel prediction-informed clustering loss to encourage high-quality node representations, and an adaptation framework compatible with a wide range of TTA algorithms. Figure \ref{fig:framework} gives a general framework. 

\begin{figure}
    \centering
    \includegraphics[width=0.7\linewidth]{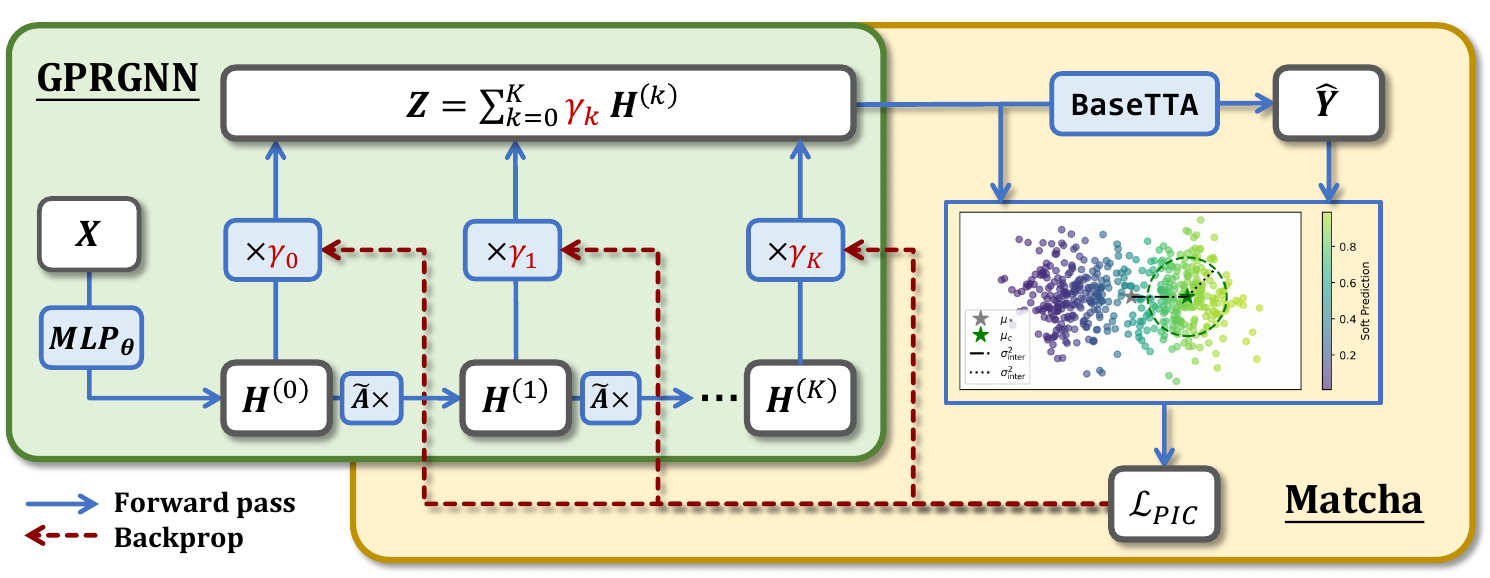}
    \vspace{-2ex}
    \caption{\rev{Our proposed framework of {\algname} (when combined with GPRGNN)} }
    \label{fig:framework}
    \vspace{-2ex}
\end{figure}

To adapt to graphs with different degree distributions and homophily, {\algname} uses GNNs that are capable of adaptively integrating multi-hop information, e.g., GPRGNN \citep{gprgnn}, APPNP \citep{appnp}, JKNet \citep{jk}, etc. Specifically, we illustrate our framework using GPRGNN as a representative case. Notably, our framework's applicability extends beyond this example, as demonstrated by the experimental results presented in Appendix \ref{appendix:architecture}, showcasing its versatility across various network architectures.

\paragraph{GPRGNN} The featurizer of GPRGNN is an MLP followed by a general pagerank module. We denote the parameters for MLP as $\vtheta$, and the parameters for the general pagerank module as $\vgamma = [\gamma_0, \cdots, \gamma_K] \in \R^{K + 1}$. The node representation of GPRGNN can be computed as $\mZ = \sum_{k=0}^K \gamma_k \mH^{(k)}$, where $\mH^{(0)} = \text{MLP}_{\vtheta}(\mX), \mH^{(k)} = \tilde\mA^k \mH^{(0)},\forall k=1,...,K$ are the 0-hop and $k$-hop representations, $\tilde\mA$ is the normalized adjacency matrix. A linear layer with weight $\vw$ following the featurizer serves as the classifier.


\subsection{Prediction-informed clustering loss} \label{subsec:loss}

This subsection introduces how {\algname} updates the {\gammanicknames} without labels. Previous TTA methods \citep{shot,tent,memo,atp} mainly adopt the entropy as a surrogate loss, as it measures the prediction uncertainty. However, we find that entropy minimization has limited effectiveness in improving representation quality (see Figure \ref{fig:feat_vis} and Table \ref{tab:loss}). Entropy is sensitive to the scale of logits rather than representation quality, often leading to trivial solutions. For instance, for a linear classifier, simply scaling up all the node representations can cause the entropy loss to approach zero, without improving the separability of the node representations between different classes. To address this issue, we propose the prediction-informed clustering (PIC) loss, which can better reflect the quality of node representation under structure shifts. Minimizing the PIC loss encourages the representations of nodes from different classes to be more distinct and less overlapping. 

Let $\mZ = [\vz_1, \cdots, \vz_N]^\top \in \R^{N \times D}$ denote the representation matrix and $\hat{\mY} \in \R_+^{N \times C}$ denote the prediction of \texttt{BaseTTA} subject to $\sum_{c=1}^C \hat{\bm{Y}}_{i,c} = 1$, where $N$ is the number of nodes, $D$ is the dimension of the node representations and $C$ is the number of classes. We first compute $\vmu_c$ as the centroid representation of each (pseudo-)class $c$, and $\vmu_*$ as the centroid representation for all nodes, 
\begin{align}
    \vmu_{c} = \frac{\sum_{i = 1}^N \hat{\bm{Y}}_{i,c} \vz_i}{\sum_{i=1}^N \hat{\bm{Y}}_{i,c}}, \quad \forall c = 1, \cdots, C, \quad\quad
    \vmu_* = \frac{1}{N}\sum_{i = 1}^N \vz_i.
\end{align}
We further define the intra-class variance $\sigma^2_{\text{intra}}$ and inter-class variance $\sigma^2_{\text{inter}}$ as:
\begin{align}
    \sigma^2_{\text{intra}} = \sum_{i=1}^N \sum_{c=1}^C \hat{\bm{Y}}_{i,c} \| \vz_i - \vmu_c \|_2^2, \quad\quad
    \sigma^2_{\text{inter}} = \sum_{c=1}^C \left(\sum_{i=1}^N \hat{\bm{Y}}_{i,c} \right) \| \vmu_c - \vmu_*  \|_2^2.
\end{align}
To obtain discriminative representations, it is natural to expect small intra-class variance $\sigma^2_{\text{intra}}$, i.e., nodes with the same label are clustered together, and high inter-class variance $\sigma^2_{\text{inter}}$, i.e., different classes are separated. Therefore, we propose the PIC loss as follows: 
\begin{align}
    \Ls_{\text{PIC}} = \frac{\sigma^2_{\text{intra}}}{\sigma^2_{\text{intra}} + \sigma^2_{\text{inter}}} = \frac{\sigma^2_{\text{intra}}}{\sigma^2},
\end{align}
where $\sigma^2$ can be simplified as $\sigma^2 = \sigma^2_{\text{intra}} + \sigma^2_{\text{inter}} = \sum_{i=1}^N \| \vz_i - \vmu_* \|_2^2$ (proof in Appendix \ref{appendix:loss}). 

It should be noted that although the form of PIC loss seems not reusing the adjacency matrix $\mA$, it still evaluates the suitability of the current {\gammanicknames} for the graph structure through the distribution of the representation $\mZ$. As shown in Figure \ref{fig:feat_vis} and Proposition \ref{prop:structure}, structure shifts cause node representations to overlap more, leading to a smaller $\sigma^2_{\text{inter}} / \sigma^2_{\text{intra}}$ and a larger PIC loss. Alternatively, some algorithms, like SOGA \citep{soga}, incorporate edge information by promoting connected nodes to share the same label. These designs implicitly assume of homophilic graph, limiting their applicability. As a result, SOGA performs poorly on heterophilic target graphs, as seen in Table \ref{tab:csbm_main}. In contrast, our PIC loss directly targets GNN-encoded node representations, allowing it to generalize across different graph structures, whether homophilic or heterophilic. 

By minimizing the PIC loss, we reduce intra-class variance while maximizing inter-class variance. Importantly, the ratio form of the PIC loss reduces sensitivity to the scale of representations; as the norm increases, the loss does not converge to zero, thus avoiding trivial solutions. It is also worth noting that the proposed PIC loss differs from the Fisher score~\citep{fisher} in two key aspects: First, PIC loss operates on model predictions, while Fisher score relies on true labels, making Fisher inapplicable in our setting where labels are unavailable. Second, PIC loss uses soft predictions for variance computation, which aids in the convergence of {\algname}, whereas the Fisher score uses hard labels, which can lead to poor convergence due to the unbounded Lipschitz constant, as we show in Theorem \ref{thm:converge}. We also provide an example in Appendix \ref{appendix:noisy_prediction} showing that {\algname} with PIC loss improves accuracy even when initial predictions are highly noisy.


\subsection{Integration of generic TTA methods}

\newcommand{\SUB}[1]{\ENSURE \hspace{-0.2in} \textbf{#1}}
\renewcommand{\algorithmicensure}{}

\begin{wrapfigure}{r}{0.5\textwidth}
\vspace{-4ex}
\begin{minipage}{\linewidth}
\begin{algorithm}[H]
  \caption{\algname} \label{alg:adarc}
  \small
  \begin{algorithmic}[1]
    \SUB{{\algname}} (target graph $\gT$, featurizer $f_{\vtheta,\vgamma}$, classifier $g_{\vw}$, baseline TTA method \texttt{BaseTTA}) \hspace{-3ex}
    \FOR{epoch $t = 1$ to $T$}
        \STATE Apply generic TTA: \\ $\hat{\mY} \leftarrow \texttt{BaseTTA}(\gT, f_{\vtheta,\vgamma}, g_{\vw})$
        \STATE Update {\gammanicknames}: \\ $\vgamma \leftarrow \vgamma - \eta \nabla_{\vgamma} \gL(\gT, f_{\vtheta,\vgamma}, g_{\vw}, \hat{\mY})$
    \ENDFOR
    \STATE \textbf{return} $\hat{\mY} \leftarrow \texttt{BaseTTA}(\gT, f_{\vtheta,\vgamma}, g_{\vw})$
  \end{algorithmic}
\end{algorithm}
\vspace{-5ex}
\end{minipage}
\end{wrapfigure}

This subsection introduces how {\algname} integrates the adaptation of {\gammanicknames} with existing TTA algorithms to simultaneously address the co-existence of structure and attribute shifts. Our approach is motivated by the complementary nature of adapting the hop-aggregation parameter and existing generic TTA methods. While the adapted hop-aggregation parameter effectively manages structure shifts, generic TTA methods handle attribute shifts in various ways. Consequently, we design a simple yet effective framework that seamlessly integrates the adaptation of hop-aggregation parameter with a broad range of existing generic TTA techniques.

Our proposed {\algname} framework is illustrated in Algorithm \ref{alg:adarc}. Given a pre-trained source GNN model $f_{\vtheta, \vgamma}\circ g_{\vw}$ and the target graph $\gT$, we first employ the baseline TTA method, named \texttt{BaseTTA}, to produce the soft prediction $\hat\mY \in \R_+^{N \times C}$ as pseudo-classes, where $\sum_{c=1}^C \hat{\bm{Y}}_{i,c} = 1$. 
Equipped with pseudo-classes, the {\gammanicknames} $\vgamma$ is adapted by minimizing the PIC loss as described in Subsection \ref{subsec:loss}. Intuitively, the predictions of \texttt{BaseTTA} are crucial for identifying pseudo-classes to cluster representations, and in return, better representations enhance the prediction accuracy of \texttt{BaseTTA}. Such synergy between representation quality and prediction accuracy mutually reinforces each other during the adaptation process, leading to much more effective outcomes. It is worth noting that {\algname} is a plug-and-play method that can seamlessly integrate with various TTA algorithms, including Tent~\citep{tent}, T3A~\citep{t3a}, and AdaNPC~\citep{adanpc}. 

\paragraph{Computational complexity} 
For each epoch, the computational complexity of the PIC loss is $\gO(NCD)$, linear to the number of nodes. Compared to SOGA \citep{soga}, which has quadratic complexity from comparing every node pair, PIC loss enjoys greater scalability to the graph size. For the whole {\algname} framework, it inevitably introduces additional computational overhead, which depends on both the GNN architecture and the baseline TTA algorithm. However, in practice, the additional computational cost is generally minimal since intermediate results (e.g. $\{\mH^{(k)}\}_{k=0}^K$) can be cached and reused. We empirically evaluate the efficiency of {\algname} in Subsection \ref{sec:exp:discuss}, Appendix \ref{appendix:time}, and the scalability in Appendix \ref{appendix:scalability}. 

\paragraph{Convergence analysis}
Finally, we analyze the convergence property of {\algname} in Theorem \ref{thm:converge} below. The formal theorem and complete proofs can be found in Appendix \ref{appendix:convergence}. 

\begin{restatable}[Convergence of {\algname}]{theorem}{thmconverge} \label{thm:converge}
    Let $\mM = [\text{vec}(\mH^{(0)}), \cdots, \text{vec}(\mH^{(K)})] \in \R^{ND \times (K + 1)}$ denote the concatenation of 0-hop to $K$-hop node representations. Given a base TTA algorithm, if (1) the prediction $\hat{\mY}$ is $L$-Lipschitz w.r.t. the (aggregated) node representation $\mZ$, and (2) the loss function is $\beta$-smooth w.r.t. $\mZ$, after $T$ steps of gradient descent with step size $\eta = \frac{1}{\beta \| \mM \|_2^2}$, we have
    \begin{align}
        \frac{1}{T} \sum_{t=0}^T \left\| \nabla_{\vgamma} \Ls (\vgamma^{(t)}) \right \|_2^2 \leq 2 \frac{\beta \| \mM \|_2^2}{T} \Ls(\vgamma^{(0)}) + C L^2 \| \mM \|_2^2, 
    \end{align}
    where $C$ is a constant. 
\end{restatable}

Theorem \ref{thm:converge} shows that {\algname} is guaranteed to converge to a flat region with small gradients, with convergence rate $\frac{1}{T}$ and error rate $\propto L^2$. Essentially, the convergence of {\algname} depends on the sensitivity of the \texttt{BaseTTA} algorithm. Intuitively, if \texttt{BaseTTA} has large Lipschitz constant $L$, it is likely to make completely different predictions in each epoch, and thus hindering the convergence of {\algname}. However, in general cases, $L$ is upper bounded. We give theoretical verification in Lemma \ref{lem:linear} under ERM, and further empirically verify the convergence of {\algname} in Figure \ref{fig:converge}.

\section{Experiments}

We conduct extensive experiments on synthetic and real-world datasets to evaluate our proposed \algname\ from the following aspects:
\begin{itemize}[leftmargin=*,topsep=0pt,itemsep=-1pt]
    \item \textbf{RQ1}: How can {\algname} empower TTA algorithms and handle various structure shifts on graphs? 
    \item \textbf{RQ2}: To what extent can {\algname} restore the representation quality better than other methods?
\end{itemize}


\subsection{{\algname} handles various structure shifts (RQ1)} \label{sec:exp:main}

\paragraph{Experiment setup} We first adopt CSBM~\citep{csbm} to generate synthetic graphs with controlled structure and attribute shifts. We consider a hybrid of attribute shift, homophily shift and degree shift. For homophily shift, we generate a homophily graph with $h=0.8$ and a heterophily graph with $h=0.2$. For degree shift, we generate a high-degree graph with $d=10$ and a low-degree graph with $d=2$. For attribute shift, we transform the class centers $\vmu_+, \vmu_-$ on the target graph. For real-world datasets, we adopt Syn-Cora~\citep{h2gcn}, Syn-Products~\citep{h2gcn}, Twitch-E~\citep{twitch}, and OGB-Arxiv~\citep{ogb}. For Syn-Cora and Syn-Products, we use $h=0.8$ as the source graph and $h=0.2$ has the target graph. For Twitch-E and OGB-Arxiv, we delete a subset of homophilic edges in the target graph to inject both degree and homophily shifts. The detailed dataset statistics are provided in Appendix \ref{appendix: rep_data}. 

We adopt GPRGNN~\citep{gprgnn} as the backbone model for the main experiments. We also provide results on other backbone models, including APPNP~\citep{appnp}, JKNet~\citep{jk}, and GCNII~\citep{gcnii} in Appendix~\ref{appendix:architecture}. Details on model architectures are provided in Appendix \ref{appendix: rep_model}. We run each experiment five times with different random seeds and report the mean accuracy and standard deviation.

\paragraph{Baselines} We consider two groups of base TTA methods, including: (1) generic TTA methods: T3A~\citep{t3a}, Tent~\citep{tent}, and AdaNPC~\citep{adanpc}, and (2) graph TTA methods: GTrans~\citep{gtrans}, SOGA~\citep{soga} and GraphPatcher~\citep{graphpatcher}. \rev{To ensure a fair comparison, we focus on TTA algorithms in the same setting, which adapt a pre-trained model to a target graph without re-accessing the source graph.} 
We adopt Empirical Risk Minimization (ERM) to pre-train the model on the source graph without adaptation. We use the node classification accuracy on the target graph to evaluate the model performance.

\begin{table*}[t]
    \centering
    \vspace{-2ex}
    \caption{Accuracy (mean $\pm$ s.d. \%) on CSBM with structure shifts and attribute shifts.}
    \vspace{1ex}
    \resizebox{1.0\linewidth}{!}{
        \begin{tabular}{lcccccccc}
            \toprule
            \multirow{2}{*}{Method} & \multicolumn{2}{c}{Homophily shift} & \multicolumn{2}{c}{Degree shift} & \multicolumn{2}{c}{Attribute + homophily shift} & \multicolumn{2}{c}{Attribute + degree shift} \\
            \cmidrule(lr){2-3} \cmidrule(lr){4-5} \cmidrule(lr){6-7} \cmidrule(lr){8-9} 
            & homo $\to$ hetero & hetero $\to$ homo & high $\to$ low & low $\to$ high & homo $\to$ hetero & hetero $\to$ homo & high $\to$ low & low $\to$ high \\
            \midrule
            ERM             & \meansd{73.62}{0.44} & \meansd{76.72}{0.89} & \meansd{86.47}{0.38} & \meansd{92.92}{0.43} 
                            & \meansd{61.06}{1.67} & \meansd{72.61}{0.38} & \meansd{77.63}{1.13} & \meansd{73.60}{3.53} \\
            $+$ \algname    & \meansd{89.71}{0.27} & \meansd{90.68}{0.26} & \meansd{88.55}{0.44} & \meansd{93.78}{0.74} 
                            & \meansd{85.34}{4.68} & \meansd{74.70}{0.99} & \meansd{78.29}{1.41} & \meansd{73.86}{4.20} \\
            \midrule
            T3A             & \meansd{73.85}{0.24} & \meansd{76.68}{1.08} & \meansd{86.52}{0.44} & \meansd{92.94}{0.37} 
                            & \meansd{65.77}{2.11} & \meansd{72.92}{0.90} & \meansd{80.89}{1.28} & \meansd{81.94}{3.24} \\
            $+$ \algname    & \meansd{90.40}{0.11} & \meansd{90.50}{0.24} & \meansd{88.42}{0.60} & \meansd{93.83}{0.41} 
                            & \meansd{88.49}{0.58} & \meansd{79.34}{1.85} & \meansd{81.82}{1.36} & \meansd{82.12}{4.03} \\
            \midrule
            Tent            & \meansd{74.64}{0.38} & \meansd{79.40}{0.57} & \meansd{86.49}{0.50} & \meansd{92.84}{0.18} 
                            & \meansd{74.42}{0.41} & \meansd{79.57}{0.40} & \meansd{86.05}{0.33} & \meansd{93.06}{0.24} \\
            $+$ \algname    & \meansd{89.93}{0.16} & \meansd{\textbf{91.26}}{\textbf{0.08}} & \meansd{\textbf{89.20}}{\textbf{0.20}} & \meansd{\textbf{94.88}}{\textbf{0.09}} 
                            & \meansd{\textbf{90.12}}{\textbf{0.07}} & \meansd{\textbf{91.15}}{\textbf{0.20}} & \meansd{\textbf{87.76}}{\textbf{0.16}} & \meansd{\textbf{95.04}}{\textbf{0.06}} \\
            \midrule
            AdaNPC          & \meansd{76.03}{0.46} & \meansd{81.66}{0.17} & \meansd{86.92}{0.38} & \meansd{91.15}{0.39} 
                            & \meansd{63.96}{1.31} & \meansd{76.33}{0.71} & \meansd{77.69}{0.91} & \meansd{76.24}{3.06} \\
            $+$ \algname    & \meansd{90.03}{0.33} & \meansd{90.36}{0.67} & \meansd{88.49}{0.31} & \meansd{92.84}{0.57} 
                            & \meansd{85.81}{0.30} & \meansd{77.63}{1.55} & \meansd{78.41}{1.03} & \meansd{76.31}{3.68} \\
            \midrule
            GTrans          & \meansd{74.01}{0.44} & \meansd{77.28}{0.56} & \meansd{86.58}{0.11} & \meansd{92.74}{0.13} 
                            & \meansd{71.60}{0.60} & \meansd{74.45}{0.42} & \meansd{83.21}{0.25} & \meansd{89.40}{0.62} \\
            $+$ \algname    & \meansd{89.47}{0.20} & \meansd{90.31}{0.31} & \meansd{87.88}{0.77} & \meansd{93.23}{0.52} 
                            & \meansd{88.88}{0.38} & \meansd{76.87}{0.66} & \meansd{83.41}{0.16} & \meansd{89.98}{0.93} \\
            \midrule
            SOGA            & \meansd{74.33}{0.18} & \meansd{83.99}{0.35} & \meansd{86.69}{0.37} & \meansd{93.06}{0.21}
                            & \meansd{70.45}{1.71} & \meansd{76.41}{0.79} & \meansd{81.31}{1.03} & \meansd{88.32}{1.94} \\
            $+$ \algname    & \meansd{89.92}{0.26} & \meansd{90.69}{0.27} & \meansd{88.83}{0.32} & \meansd{94.49}{0.23} 
                            & \meansd{88.92}{0.28} & \meansd{90.14}{0.33} & \meansd{87.11}{0.28} & \meansd{93.38}{1.06} \\
            \midrule
            \rev{GraphPatcher}    & \meansd{79.14}{0.62} & \meansd{82.14}{1.11} & \meansd{87.87}{0.18} & \meansd{93.64}{0.45} 
                            & \meansd{64.16}{3.49} & \meansd{76.98}{1.04} & \meansd{76.99}{1.43} & \meansd{73.31}{4.48} \\
            \rev{$+$ \algname}    & \meansd{\textbf{91.28}}{\textbf{0.28}} & \meansd{90.66}{0.15} & \meansd{88.01}{0.18} & \meansd{93.88}{0.69}                  & \meansd{89.99}{0.41} & \meansd{87.94}{0.39} & \meansd{78.43}{1.84} & \meansd{77.86}{4.14} \\
            \bottomrule 
        \end{tabular}
    }
    \label{tab:csbm_main}
    \vspace{-3ex}
\end{table*}
\begin{figure}
    \centering
    \begin{minipage}[t]{0.50\linewidth}
        \centering
        \begin{table}[H]
    \vspace{-3ex}
    \caption{Accuracy on real-world datasets.}\label{tab:real}
    \vspace{1ex}
    \resizebox{1.0\linewidth}{!}{
    \setlength{\tabcolsep}{1.5mm}{
        \begin{tabular}{lcccccccc}
            \toprule
            Method & Syn-Cora & Syn-Products & Twitch-E & OGB-Arxiv \\
            \midrule
            ERM             & \meansd{65.67}{0.35} & \meansd{37.80}{2.61} & \meansd{56.20}{0.63} & \meansd{41.06}{0.33} \\
            $+$ \algname    & \meansd{78.96}{1.08} & \meansd{69.75}{0.93} & \meansd{56.76}{0.22} & \meansd{41.74}{0.34} \\
            \midrule
            T3A             & \meansd{68.25}{1.10} & \meansd{47.59}{1.46} & \meansd{56.83}{0.22} & \meansd{38.17}{0.31} \\
            $+$ \algname    & \meansd{78.40}{1.04} & \meansd{69.81}{0.36} & \meansd{56.97}{0.28} & \meansd{38.56}{0.27} \\
            \midrule
            Tent            & \meansd{66.26}{0.38} & \meansd{29.14}{4.50} & \meansd{58.46}{0.37} & \meansd{34.48}{0.28} \\
            $+$ \algname    & \meansd{78.87}{1.07} & \meansd{68.45}{1.04} & \meansd{\textbf{58.57}}{\textbf{0.42}} & \meansd{35.20}{0.27} \\
            \midrule
            AdaNPC          & \meansd{67.34}{0.76} & \meansd{44.67}{1.53} & \meansd{55.43}{0.50} & \meansd{40.20}{0.35} \\
            $+$ \algname    & \meansd{77.45}{0.62} & \meansd{71.66}{0.81} & \meansd{56.35}{0.27} & \meansd{40.58}{0.35} \\
            \midrule
            GTrans          & \meansd{68.60}{0.32} & \meansd{43.89}{1.75} & \meansd{56.24}{0.41} & \meansd{41.28}{0.31}\\
            $+$ \algname    & \meansd{\textbf{83.49}}{\textbf{0.78}} & \meansd{\textbf{71.75}}{\textbf{0.65}} & \meansd{56.75}{0.40} & \meansd{41.81}{0.31} \\
            \midrule
            SOGA            & \meansd{67.16}{0.72} & \meansd{40.96}{2.87} & \meansd{56.12}{0.30} & \meansd{41.23}{0.34} \\
            $+$ \algname    & \meansd{79.03}{1.10} & \meansd{70.13}{0.86} & \meansd{56.62}{0.17} & \meansd{41.78}{0.34}  \\
            \midrule
            \rev{GraphPatcher}    & \meansd{63.01}{2.29} & \meansd{36.94}{1.50} & \meansd{57.05}{0.59} & \meansd{41.27}{0.87}\\
            \rev{$+$ \algname}    & \meansd{80.99}{0.50} & \meansd{69.39}{1.29} & \meansd{57.41}{0.53} & \meansd{\textbf{41.83}}{\textbf{0.90}}\\
            \bottomrule 
        \end{tabular}
    }}
    \end{table}
    \end{minipage}
    \hspace{0.02\linewidth}
    \begin{minipage}[t]{0.46\linewidth}
        \centering
    \vspace{2ex}
    \includegraphics[width=0.9\linewidth]{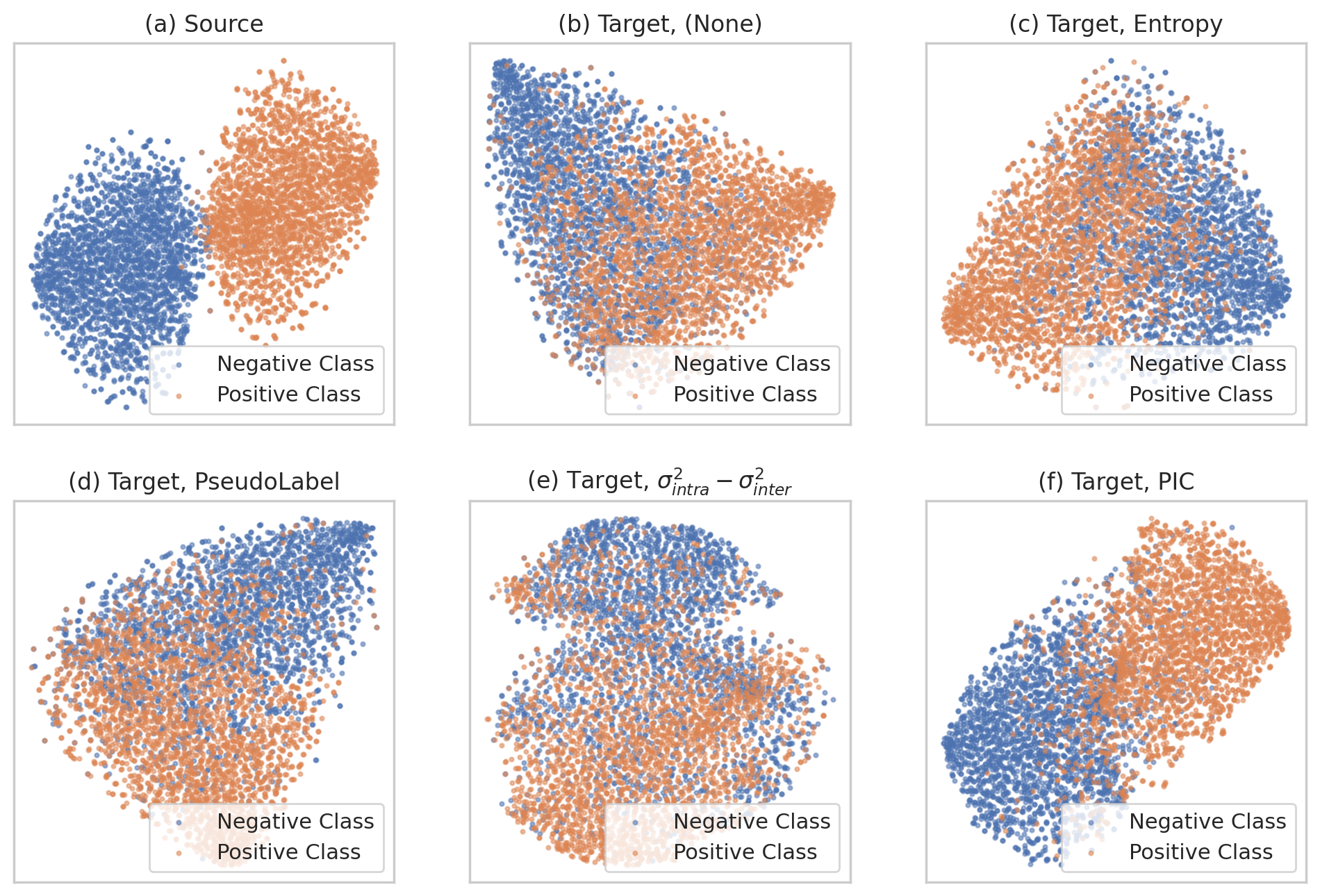}
    \vspace{-1ex}
    \caption{T-SNE visualization of node representations on CSBM homo $\to$ hetero. }
    \label{fig:feat_vis}
    \end{minipage}
    \vspace{-3ex}
\end{figure}

\paragraph{Main Results} The experimental results on the CSBM dataset are shown in Table~\ref{tab:csbm_main}. Under various shifts, the proposed \algname\ consistently enhances the performance of base TTA methods. Specifically, compared to directly using the pre-trained model without adaptation (ERM), adopting \algname\ (ERM+\algname) could significantly improve model performance, with up to 24.28\% improvements. Compared with other baseline methods, \algname\ achieves the best performance in most cases, with up to 21.38\% improvements. Besides, since \algname\ is compatible and complementary with the baseline TTA methods, we also compare the performance of baseline methods with and without \algname. As the results show, \algname\ could further boost the performance of TTA baselines by up to 22.72\%. 

For real-world datasets, the experimental results are shown in Table~\ref{tab:real}. Compared with ERM, \algname\ could significantly improve the model performance by up to 31.95\%. Compared with other baseline methods, \algname\ achieves comparable performance on Twitch-E, and significant improvements on Syn-Cora, Syn-Products and OGB-Arxiv, with up to 
40.61\% outperformance. When integrated with other TTA methods, \algname\ can further enhance the performance by up to 39.31\%. The significant outperformance verifies the effectiveness of the proposed \algname. 

\paragraph{Additional experiments} We also demonstrate that {\algname} exhibits robustness against (1) \textit{structure shifts of varying levels}, (2) \textit{evolving target graph}, and (3) \textit{additional adversarial shifts} in Appendix \ref{appendix:levels_of_shift}, \ref{appendix:evolve}, and \ref{appendix:adversarial}, respectively.



\subsection{{\algname} restores the representation quality (RQ2)}

Besides the superior performance of \algname, we are also interested in whether \algname\ successfully restores the quality of node representations under structure shifts. To explore this, we visualize the learned node representations on 2-class CSBM graphs in Figure~\ref{fig:feat_vis}. Although the pre-trained model generates high-quality node representations (Figure~\ref{fig:feat_vis}(a)), node representations degrades dramatically when directly deploying the source model to the target graph without adaptation (Figure~\ref{fig:feat_vis}(b)). With our proposed PIC loss, {\algname} successfully restores the representation quality with a clear cluster structure (Figure~\ref{fig:feat_vis}(f)). Moreover, compared to other common surrogate losses (entropy, pseudo-label), PIC loss results in significantly better representations. 

\subsection{More discussions} \label{sec:exp:discuss}

\begin{figure}
    \centering
    \begin{minipage}[t]{0.462\linewidth}
        \centering
    \includegraphics[width=0.9\linewidth]{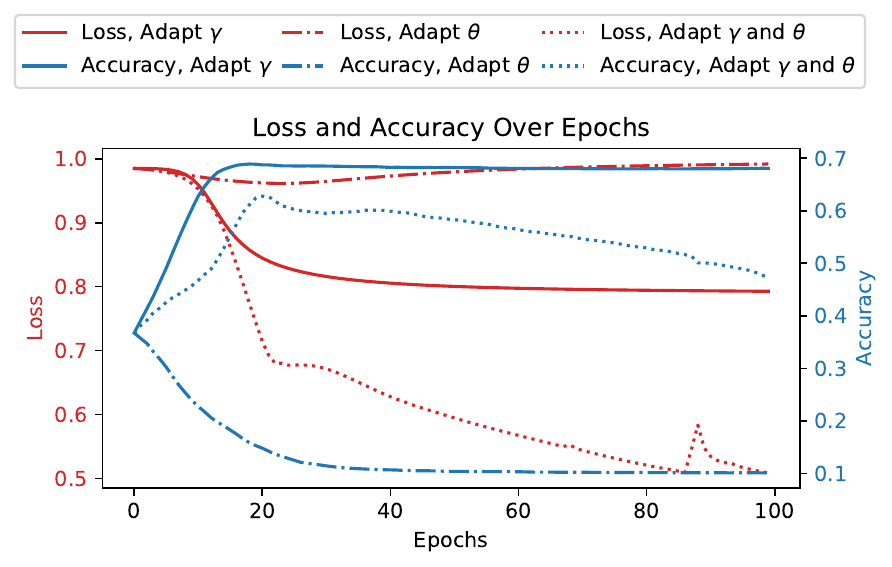}
    \vspace{-2ex}
    \caption{Ablation study on Syn-Products with different sets of parameters to adapt.}
    \label{fig:ablation}
    \end{minipage}
    \hspace{0.03\linewidth}
    \begin{minipage}[t]{0.478\linewidth}
        \centering
        \includegraphics[width=0.9\linewidth]{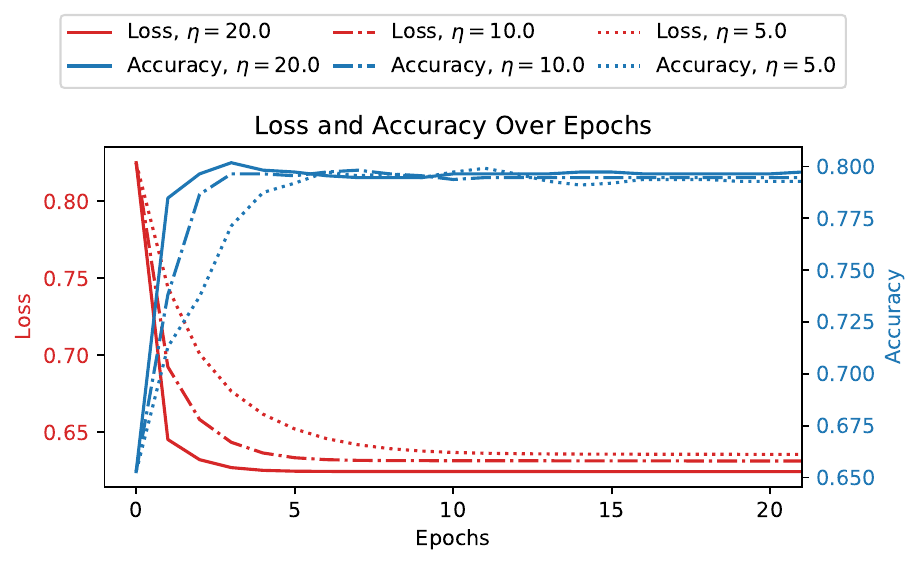}
        \vspace{-2ex}
        \caption{Convergence of {\algname} on Syn-Cora with different learning rates $\eta$.}
        \label{fig:converge}
    \end{minipage}
    \vspace{-3ex}
\end{figure}

\paragraph{Ablation study}
While {\algname} adapts only the {\gammanicknames} $\vgamma$ to improve representation quality, other strategies exist, such as adapting the MLP parameters $\vtheta$ or both $\vgamma$ and $\vtheta$ together. As shown in Figure \ref{fig:ablation}, adapting only $\vtheta$ fails to significantly reduce the PIC loss or improve accuracy. Adapting both $\vgamma$ and $\vtheta$ minimizes the PIC loss but leads to model forgetting, causing an initial accuracy increase followed by a decline. In contrast, adapting only $\vgamma$ results in smooth loss convergence and stable accuracy, demonstrating that {\algname} effectively adapts to structure shifts without forgetting source graph information. We also compare our proposed PIC loss to other surrogate losses in Appendix \ref{appendix:ablation_loss}. Our PIC loss has better performance under four structure shifts. 

\paragraph{Hyperparameter sensitivity} 
{\algname} only introduces two hyperparameters including the learning rate $\eta$ and the number of epochs $T$. In Figure \ref{fig:converge}, we explore different combinations of them. We observe that {\algname} converges smoothly in just a few epochs, and the final loss and accuracy are quite robust to various choices of the learning rate. Additionally, as discussed in Appendix \ref{appendix:hp_num_gprs}, we examine the effect of the dimension of hop-aggregation parameters $K$ on {\algname}, and find that it consistently provides stable accuracy gains across a wide range of $K$ values.

\paragraph{Computational efficiency}
We quantify the additional computation time introduced by {\algname} during the test-time. Compared to the standard inference time, {\algname} \textit{only adds an extra 11.9\% in computation time for each epoch of adaptation}. In comparison, GTrans and SOGA adds 486\% and 247\% in computation time. {\algname} enjoys great efficiency resulting from only updating the hop-aggregation parameters and efficient loss design. Please refer to Appendix \ref{appendix:time} for more details. 

\paragraph{Compatibility to more GNN architectures}
Besides GPRGNN, {\algname} is compatible with various GNN architectures, e.g., JKNet~\citep{jk}, APPNP~\citep{appnp}, and GCNII~\citep{gcnii}. In Appendix \ref{appendix:architecture}, we test the performance of {\algname} with these networks on Syn-Cora. {\algname} consistently improves the accuracy.

\section{Conclusion}
In this paper, we explore why generic TTA algorithms perform poorly under structure shifts. Theoretical analysis reveals that attribute structure shifts on graphs bear distinct impact patterns on the GNN performance, where the attribute shifts introduce classifier bias while the structure shifts degrade the node representation quality. Guided by this insight, we propose {\algname}, a plug-and-play TTA framework that restores the node representation quality with convergence guarantee. Extensive experiments consistently and significantly demonstrate the effectiveness of \algname.

\section*{Acknowledgment}

This work is supported by National Science Foundation under Award No. IIS-2416070, and the U.S. Department of Homeland Security under Grant Award Number 17STQAC00001-08-00. The views and conclusions are those of the authors and should not be interpreted as representing the official policies of the funding agencies or the government.

\bibliographystyle{iclr2025_conference}
\bibliography{main}

\appendix

\newpage
\addtocontents{toc}{\protect\setcounter{tocdepth}{2}}
\tableofcontents

\newpage
\section{Theoretical analysis} \label{appendix:proof}


\subsection{Illustration of representation degradation and classifier bias}

\begin{figure}[h!]
    \centering
    \includegraphics[width=0.9\linewidth]{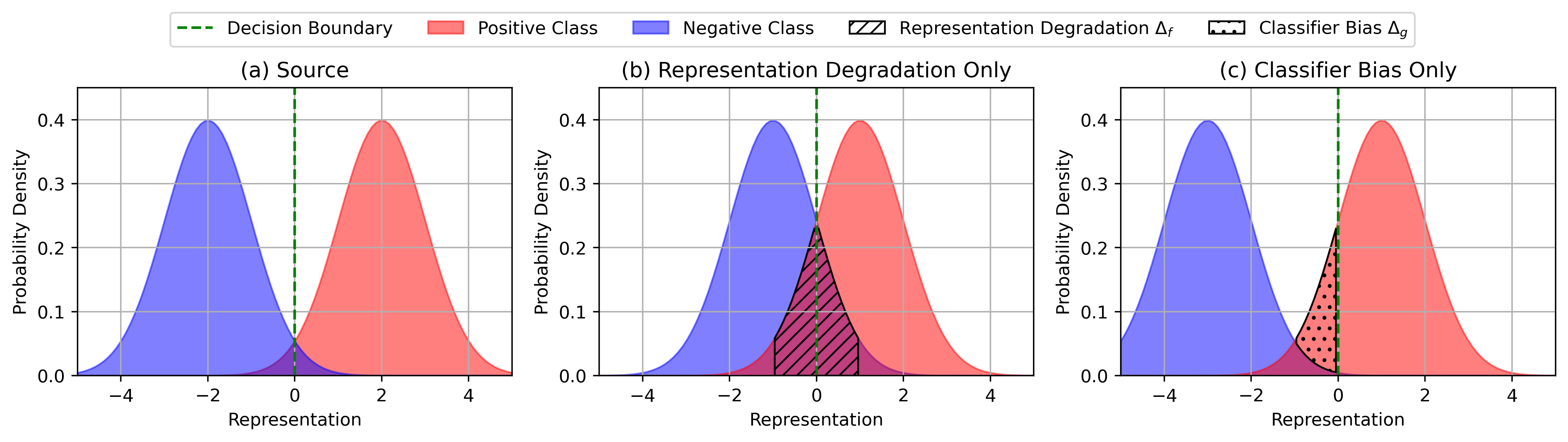}
    \vspace{-1ex}
    \caption{An example of representation degradation and classifier bias. (b) Representation degradation blurs the boundary between two classes and increases their overlap. (c) Classifier bias translates the representation and makes the decision boundary sub-optimal. }
    \label{fig:gap_decompose}
    \vspace{-1ex}
\end{figure}

Figure \ref{fig:gap_decompose} above visualizes representation degradation and classifier bias. 
\begin{itemize}
    \item Figure \ref{fig:gap_decompose}(c): Under classifier bias, the representations are shifted to the left, making the decision boundary sub-optimal. However, by refining the decision boundary, the accuracy can be fully recovered. 
    \item Figure \ref{fig:gap_decompose}(b): Under representation degradation, however, even if we refine the decision boundary, the accuracy cannot be recovered without changing the node representations. 
\end{itemize}

Moreover, comparing Figure \ref{fig:gap_decompose} with Figure \ref{fig:different_shift}, we can clearly conclude that attribute shifts mainly introduce classifier bias, while structure shift mainly introduce representation degradation.  


\newpage
\subsection{Illustration of adapting $\gamma$} \label{appendix:vis2}

In the end of subsection \ref{subsec:adapt}, we provide two examples to intuitively illustrate why the hop-aggregation parameter $\gamma$ should be adjusted based on the target graph's degree $d_T$ and homophily $h_T$. To further demonstrate the effect of adapting $\gamma$, we visualize these examples: 

\begin{figure}[h!]
    \centering
    \includegraphics[width=0.9\linewidth]{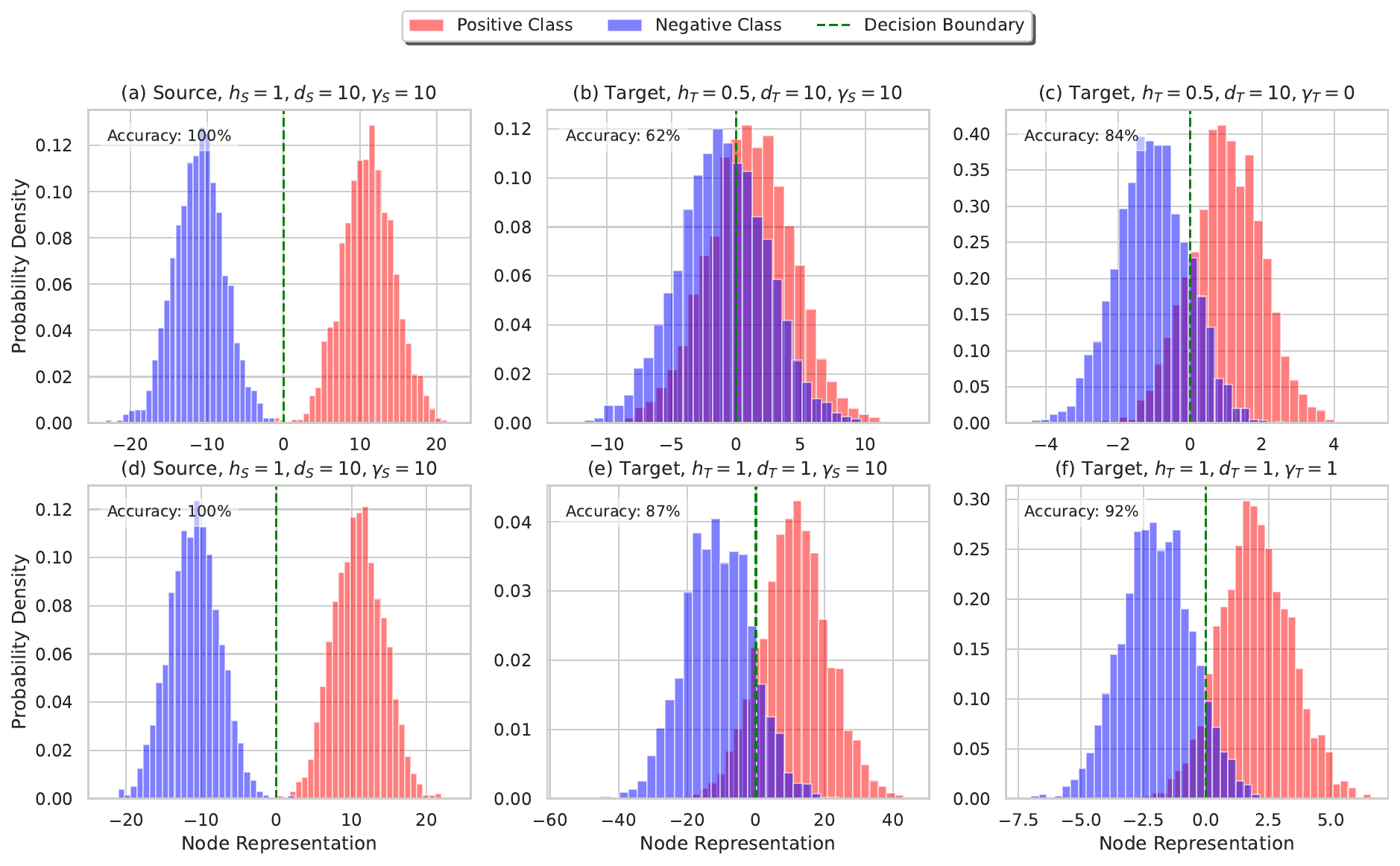}
    \caption{Visualization of effect of structure shifts and adaptation of $\gamma$. }
    \label{fig:example}
\end{figure}

\begin{itemize}
    \item \textbf{Source Graph}: We consider a source graph with $\mu_+ = 1$, $\mu_- = -1$, $h_S = 1$, and $d_S = 10$. In this case, the optimal featurizer assigns equal weight to the node itself and each of its neighbors, resulting in the optimal $\gamma_S = 10$. Figure \ref{fig:example}(a) and (d) show the distribution of node representations on the source graph, where the two classes are well-separated.
    \item \textbf{Homophily Shift}: In Figure \ref{fig:example}(b), the degree remains unchanged, but the homophily $h_T$ decreases to 0.5, meaning each node’s neighbors are equally likely to belong to either class. The neighbors no longer provide reliable information for classification, introducing noise and reducing accuracy to 62\%. However, in Figure \ref{fig:example} (c), after adjusting $\gamma_T$ to the optimal $d_T(2h_T - 1) = 0$, the accuracy improves significantly to 84\%.
    \item \textbf{Degree Shift}: In Figure \ref{fig:example}(e), the homophily remains unchanged, but the degree $d_T$ decreases to 1. The original $\gamma$ overemphasizes the neighbors’ representations, placing excessive weight on them, leading to an accuracy drop to 87\%. By adjusting $\gamma_T$ to the optimal $d_T(2h_T - 1) = 1$ in Figure \ref{fig:example}(f), the accuracy improves to 92\%.
\end{itemize}


\newpage
\subsection{Proof of Proposition \ref{prop:node_representation} and Corollary \ref{crl:acc}}

\noderepresentation*

\begin{proof}
    For each node $v_i \in \sC_+$, its representation is computed as
    \begin{align*}
        \vz_i = \vx_i + \gamma \cdot \frac{1}{d_i} \sum_{v_j \in \sN(v_i)} \vx_j
    \end{align*}
    The linear combination of Gaussian distribution is still Gaussian. Among the $d_i = |\sN(v_i)|$ neighbors of node $v_i$, there are $h_i d_i$ nodes from $\sC_+$ and $(1 - h_i) d_i$ nodes from $\sC_-$. Therefore, the distribution of $\vz_i$ is
    \begin{align*}
        \vz_i \sim \gN\left( (1 + \gamma h_i ) \vmu_+ + \gamma (1 - h_i) \vmu_-, \left( 1 + \frac{\gamma^2}{d_i} \right) \mI \right)
    \end{align*}
    Similarly, for each node $v_i \in \sC_-$, the distribution of $\vz_i$ is
    \begin{align*}
        \vz_i \sim \gN\left( (1 + \gamma h_i ) \vmu_- + \gamma (1 - h_i) \vmu_+, \left( 1 + \frac{\gamma^2}{d_i} \right) \mI \right)
    \end{align*}
\end{proof}
\begin{remark}
    When $\gamma \to \infty$, this proposition matches with the results in \citep{necessity}.\footnote{Notice that our notation is slightly different: we use the covariance matrix while they use the square root of it in the multivariate Gaussian distribution.}
\end{remark}


\newpage 
\crlacc*

\begin{proof}
    Given $\vmu_+ = \vmu, \vmu_- = -\vmu$ and $d_i = d, h_i = h, \forall i$, we have
    \begin{align*}
        \vz_i \sim \gN\left( (1 + \gamma (2 h - 1) ) y_i \vmu, \left( 1 + \frac{\gamma^2}{d} \right) \mI \right)
    \end{align*}
    where $y_i \in \{\pm 1\}$ is the label of node $v_i$. Given two multivariate Gaussian distributions with identical isotropic covariance matrix, the optimal decision boundary that maximize the expected accuracy is the perpendicular bisector of the line segment connecting two distribution means, i.e., 
    \begin{align*}
        \left\{ \vz :\left\| \vz - (1 + \gamma (2 h - 1) ) \vmu \right\|_2 = \left\| \vz - (1 + \gamma (2 h - 1) ) (- \vmu) \right \|_2 \right\} = \left\{ \vz : \vz^\top \vmu = 0 \right\}
    \end{align*}
    The corresponding classifier is: 
    \begin{align}
        \vw = \sign(1 + \gamma(2h - 1)) \cdot \frac{\vmu}{\| \vmu \|_2}, \quad b = 0 \label{eq:classifier}
    \end{align}
    To compute the expected accuracy for classification, we consider the distribution of $\vz_i^\top \vw + b$. 
    \begin{align}
        \vz_i^\top \vw + b \sim \gN\left( |1 + \gamma(2h - 1)| \cdot y_i \cdot \| \vmu \|_2, 1 + \frac{\gamma^2}{d} \right) \label{eq:logit_dist}
    \end{align}
    We scale it to unit identity variance, 
    \begin{align*}
        \sqrt{\frac{d}{d + \gamma^2}} \cdot (\vz_i^\top \vw + b) \sim \gN\left( \sqrt{\frac{d}{d + \gamma^2}} \cdot |1 + \gamma(2h - 1)| \cdot y_i \cdot \| \vmu \|_2, 1 \right)
    \end{align*}
    Therefore, the expected accuracy is
    \begin{align}
        \acc = \Phi \left(\sqrt{\frac{d}{d + \gamma^2}} \cdot |1 + \gamma (2h - 1)| \cdot \| \vmu \|_2 \right) \label{eq:acc}
    \end{align}
    where $\Phi$ is the CDF of the standard normal distribution. 
\end{proof}


\newpage
\subsection{Proof of Proposition \ref{prop:attribute}}

\propattribute*

\begin{proof}
    We can reuse the results in Corollary \ref{crl:acc} by setting $\vmu_+ = \vmu + \Delta \vmu$ and $\vmu_- = - \vmu + \Delta \vmu$. For each node $v_i$, we have
    \begin{align*}
        \vz_i \sim \gN\left( (1 + \gamma (2 h - 1) ) y_i \vmu + (1 + \gamma) \Delta \vmu, \left( 1 + \frac{\gamma^2}{d} \right) \mI \right)
    \end{align*}
    Given the classifier in Corollary \ref{crl:acc}, we have
    \begin{align*}
        &\sqrt{\frac{d}{d + \gamma^2}} \cdot (\vz_i^\top \vw + b) \nonumber\\ 
        &\sim \begin{cases}
            \gN\left( \sqrt{\frac{d}{d + \gamma^2}} \cdot |1 + \gamma(2h - 1)| \cdot \| \vmu \|_2 + \right. \\ \left. \sqrt{\frac{d}{d + \gamma^2}} \cdot \sign(1+\gamma(2h-1)) \cdot (1 + \gamma) \cdot \text{cos\_sim}(\vmu, \Delta \vmu) \| \Delta \vmu\|_2, 1 \right), & \forall v_i \in \sC_+ \\
            \gN\left( - \sqrt{\frac{d}{d + \gamma^2}} \cdot |1 + \gamma(2h - 1)| \cdot \| \vmu \|_2 + \right. \\ \left. \sqrt{\frac{d}{d + \gamma^2}} \cdot \sign(1+\gamma(2h-1)) \cdot (1 + \gamma)\cdot \text{cos\_sim}(\vmu, \Delta \vmu) \| \Delta \vmu\|_2, 1 \right), & \forall v_i \in \sC_- \\
        \end{cases}
    \end{align*}
    where $\text{cos\_sim}(\vmu, \Delta \vmu) = \frac{\vmu^\top \Delta\vmu}{\| \vmu \|_2 \cdot \| \Delta \vmu \|_2}$. On the target graph, the expected accuracy is
    \begin{align*}
        \acc_{T} = &\frac{1}{2} \Phi\left(\sqrt{\frac{d}{d + \gamma^2}} \cdot |1 + \gamma(2h - 1)| \cdot \| \vmu \|_2 + \sqrt{\frac{d}{d + \gamma^2}} \cdot |1 + \gamma| \cdot \text{cos\_sim}(\vmu, \Delta \vmu) \| \Delta \vmu\|_2 \right) + \\
        &\frac{1}{2} \Phi\left(\sqrt{\frac{d}{d + \gamma^2}} \cdot |1 + \gamma(2h - 1)| \cdot \| \vmu \|_2 - \sqrt{\frac{d}{d + \gamma^2}} \cdot |1 + \gamma| \cdot \text{cos\_sim}(\vmu, \Delta \vmu) \| \Delta \vmu\|_2\right)
    \end{align*}
    where $\Phi$ is the CDF of standard normal distribution. In order to compare the accuracy with the one in Corollary \ref{crl:acc}, we use Taylor expansion with Lagrange remainder. Let $x_0 = \sqrt{\frac{d}{d + \gamma^2}} \cdot |1 + \gamma(2h - 1)| \cdot \| \vmu \|_2$ and $\Delta x = x - x_0 = \sqrt{\frac{d}{d + \gamma^2}} \cdot |1 + \gamma| \cdot \text{cos\_sim}(\vmu, \Delta \vmu) \| \Delta \vmu\|_2$. The Taylor series of $\Phi(x)$ at $x = x_0$ is:
    \begin{align*}
        \Phi(x) = \Phi(x_0) + \varphi(x_0) \Delta x + \frac{\varphi'(x_0 + \lambda \Delta x)}{2} (\Delta x)^2, \quad \exists \lambda \in (0, 1)
    \end{align*}
    where $\varphi(x) = \Phi'(x) = \frac{1}{\sqrt{2\pi}} e^{-\frac{1}{2} x^2}$ is the PDF of standard normal distribution and $\varphi'(c) = \Phi''(x)$ is the derivative of $\varphi(x)$. Therefore, the accuracy gap is: 
    \begin{align*}
        \acc_S - \acc_T &= \Phi(x) - \frac{1}{2} \Phi(x + \Delta x) - \frac{1}{2} \Phi(x - \Delta x) \\
        &= - \frac{\varphi'(x_0 + \lambda_1 \Delta x) + \varphi'(x_0 - \lambda_2 \Delta x)}{4} \cdot (\Delta x)^2, \quad \exists \lambda_1, \lambda_2 \in (0, 1)
    \end{align*}
    We finally give lower and upper bound of $- \frac{\varphi'(x_0 + \lambda_1 \Delta x) + \varphi'(x_0 - \lambda_2 \Delta x)}{4}$. Given $\| \Delta \vmu \|_2 \leq \frac{| 1 + \gamma (2h - 1)|}{|1 + \gamma| }\| \vmu \|_2$, we have $0 \leq \Delta x \leq x_0$ and thus $0 < x_0 - \lambda_2 \Delta x < x_0 + \lambda_1 \Delta x < 2 x_0$. When $0 < x < \infty$, we have $-\frac{1}{\sqrt{2\pi e}} \leq \varphi'(x) < 0$. Therefore we can give an upper bound of the constant: 
    \begin{align*}
        - \frac{\varphi'(x_0 + \lambda_1 \Delta x) + \varphi'(x_0 - \lambda_2 \Delta x)}{4} \leq \frac{1}{2\sqrt{2\pi e}}
    \end{align*}
    and also a lower bound
    \begin{align*}
        - \frac{\varphi'(x_0 + \lambda_1 \Delta x) + \varphi'(x_0 - \lambda_2 \Delta x)}{4} 
        \geq - \frac{\varphi'(x_0 + \lambda_1 \Delta x)}{4} 
        \geq - \frac{\max\{ \varphi'(x_0), \varphi'(2x_0) \}}{4} > 0
    \end{align*}

    Therefore, we have 
    \begin{align*}
        \acc_S - \acc_T = \Theta((\Delta x)^2) = \Theta(\| \Delta \vmu \|_2^2)
    \end{align*}
    
    We finally derive representation degradation and classifier bias. On the target graph, the optimal classifier is, 
    \begin{align*}
        \vw = \sign(1+ \gamma(2h - 1)) \cdot \frac{\vmu}{\| \vmu \|_2}, \quad b = - \sign(1 + \gamma(2h-1)) \cdot (1 + \gamma) \cdot \text{cos\_sim}(\vmu, \Delta \vmu) \| \Delta \vmu\|_2
    \end{align*}
    In this case, the distribution of $\vz_i^\top \vw + b$ will be identical to Eq. (\ref{eq:logit_dist}), and the accuracy will be identical to Eq. (\ref{eq:acc}). It indicates that the representation degradation is $\Delta_f = 0$, and $\Delta_g = (\acc_S - \acc_T) - \Delta_f = \Theta(\| \Delta \vmu \|_2^2)$. 
\end{proof}


\newpage
\subsection{Proof of Proposition \ref{prop:structure}}

\propstructure*

\begin{proof}
    Without loss of generality, we consider a case with $\gamma > 0$, $\frac{1}{2} < h_T < h_S \leq 1$ and $1 \leq d_T < d_S \leq N$. In this case, decreases in both homophily and degree will lead to decreases in accuracy. Notice that our proposition can also be easily generalized to heterophilic setting. 
    
    We can reuse the results in Corollary \ref{crl:acc}. Given $\frac{1}{2} < h_T < h_S$, we have $\sign(1 + \gamma(2h_S - 1)) = \sign(1 + \gamma(2h_T - 1))$, and thus the optimal classifier we derived in Eq. (\ref{eq:classifier}) remains optimal on the target graph. Therefore, we have $\Delta_g = 0$, which means that the accuracy gap solely comes from the representation degradation. To calculate the accuracy gap, we consider the accuracy score as a function of degree $d$ and homophily $h$, 
    \begin{align*}
        F\left(\left[\begin{matrix}d \\ h\end{matrix}\right]\right) = \Phi \left(\sqrt{\frac{d}{d + \gamma^2}} \cdot |1 + \gamma (2h - 1)| \cdot \| \vmu \|_2 \right)
    \end{align*}
    Its first order derivative is
    \begin{align*}
        \frac{\partial F}{\partial d} &= \frac{\gamma^2}{2 d^{\frac{1}{2}} (d + \gamma^2)^\frac{3}{2}} \cdot |1 + \gamma (2h - 1)| \cdot \| \vmu \|_2 \cdot \varphi\left(\sqrt{\frac{d}{d + \gamma^2}} \cdot |1 + \gamma (2h - 1)| \cdot \| \vmu \|_2 \right) \\
        \frac{\partial F}{\partial h} &= \sqrt{\frac{d}{d + \gamma^2}} \cdot 2 \gamma \cdot \| \vmu \|_2 \cdot \varphi\left(\sqrt{\frac{d}{d + \gamma^2}} \cdot |1 + \gamma (2h - 1)| \cdot \| \vmu \|_2 \right) 
    \end{align*}
    Both partial derivatives have lower and upper bounds, in the range of $h \in [\frac{1}{2}, 1], d \in [1, N]$:
    \begin{align*}
        \frac{\partial F}{\partial d} &\leq \frac{\gamma^2}{2(1 + \gamma^2)^{\frac{3}{2}}} \cdot (1 + \gamma) \cdot \| \vmu \|_2 \cdot \frac{1}{\sqrt{2 \pi}} \\
        \frac{\partial F}{\partial d} &\geq \frac{\gamma^2}{2 N^\frac{1}{2} (N + \gamma^2)^\frac{3}{2}} \cdot \| \vmu \|_2 \cdot \frac{1}{\sqrt{2 \pi}} \\
        \frac{\partial F}{\partial h} &\leq 2 \gamma \cdot \| \vmu \|_2 \cdot \frac{1}{\sqrt{2 \pi}}  \\
        \frac{\partial F}{\partial h} &\geq \sqrt{\frac{1}{1 + \gamma^2}}\cdot 2\gamma \cdot \| \vmu \|_2 \cdot \frac{1}{\sqrt{2 \pi}} 
    \end{align*}
    Finally, to compare $\acc_S$ and $\acc_T$, we consider the Taylor expansion of $F$ at $\left[\begin{matrix}d_S \\ h_S\end{matrix}\right]$: 
    \begin{align*}
        F\left(\left[\begin{matrix}d_T \\ h_T \end{matrix}\right]\right) = F\left(\left[\begin{matrix}d_S - \Delta d \\ h_S - \Delta h \end{matrix}\right]\right) = F\left(\left[\begin{matrix}d_S \\ h_S \end{matrix}\right]\right) - \nabla F\left( \left[\begin{matrix}d_S - \lambda \Delta d \\ h_S - \lambda \Delta h \end{matrix}\right] \right)^\top \left[\begin{matrix}\Delta d \\ \Delta h\end{matrix}\right], \quad \exists \lambda \in (0, 1)
    \end{align*}
    Therefore, 
    \begin{align*}
        \acc_S - \acc_T &= F\left(\left[\begin{matrix}d_S \\ h_S \end{matrix}\right]\right)  - F\left(\left[\begin{matrix}d_T \\ h_T \end{matrix}\right]\right) \\
        &= \Theta(\Delta d + \Delta h)
    \end{align*}
    and also $\Delta_f = (\acc_S - \acc_T) - \Delta_g = \Theta(\Delta d + \Delta h)$. 
\end{proof}


\newpage
\subsection{Proof of Proposition \ref{prop:adapt}}

In this part, instead of treating $\gamma$ as fixed hyperparameter (as in Proposition \ref{prop:attribute} and \ref{prop:structure}), we now consider $\gamma$ as a trainable parameter that can be optimizer on both source and target graphs. We first derive the optimal $\gamma$ for a graph in Lemma \ref{lem:gamma}

\begin{lemma}\label{lem:gamma}
    When training a single-layer GCN on a graph generated from $\text{CSBM}(\vmu, - \vmu, d, h)$, the optimal $\gamma$ that maximized the expected accuracy is $d(2h - 1)$. 
\end{lemma}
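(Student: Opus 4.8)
The plan is to start from the expected-accuracy formula already established in Corollary~\ref{crl:acc}, namely
\begin{align*}
\acc(\gamma) = \Phi\left(\sqrt{\frac{d}{d + \gamma^2}} \cdot |1 + \gamma(2h - 1)| \cdot \|\vmu\|_2\right),
\end{align*}
and to optimize over $\gamma$. Since $\Phi$ is strictly increasing and $\|\vmu\|_2 > 0$ is a fixed positive constant, maximizing $\acc(\gamma)$ is equivalent to maximizing the quantity $G(\gamma) := \frac{d}{d+\gamma^2}\,(1 + \gamma(2h-1))^2$ (squaring is harmless because $\Phi$'s argument is compared through its absolute value and the square is monotone on the nonnegatives). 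So the first step I would carry out is this reduction to an elementary single-variable optimization of $G(\gamma)$ over $\gamma \in \R$.

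Next I would differentiate $G$. Writing $a := 2h - 1$ (so $a \in (0,1]$ in the regime of interest, though the computation works for general $a$), we have $G(\gamma) = d\,\frac{(1+a\gamma)^2}{d+\gamma^2}$. Taking the derivative and setting the numerator of $G'(\gamma)$ to zero gives, after clearing the common factor $(1+a\gamma)$ and simplifying,
\begin{align*}
2a(d+\gamma^2) - 2\gamma(1+a\gamma) = 0 \quad\Longleftrightarrow\quad ad - \gamma = 0 \quad\Longleftrightarrow\quad \gamma = ad = d(2h-1).
\end{align*}
(The other critical point $\gamma = -1/a$ makes $G = 0$, which is the global minimum, not the maximum.) So the candidate optimum is $\gamma^\star = d(2h-1)$, matching the claim.

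Finally I would confirm this is indeed the global maximizer, not merely a critical point. Two clean ways: (i) check the sign of $G'$ — it is positive for $\gamma < ad$ (away from the trivial zero $\gamma=-1/a$) and negative for $\gamma > ad$, so $\gamma^\star$ is the unique interior maximizer; or (ii) simply evaluate $G(\gamma^\star) = d\,\frac{(1+a^2d)^2}{d + a^2d^2} = d\,\frac{(1+a^2d)^2}{d(1+a^2d)} = 1 + a^2 d = 1 + d(2h-1)^2$, and note $G(\gamma) \to d a^2 < 1 + da^2$ as $\gamma \to \pm\infty$ while $G$ is continuous with only the two critical points found, so the maximum is attained at $\gamma^\star$. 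Substituting back, the optimal expected accuracy is $\Phi\!\left(\sqrt{1 + d(2h-1)^2}\,\|\vmu\|_2\right)$, which is a tidy by-product worth recording since it will be used in the proof of Proposition~\ref{prop:adapt}.

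I do not anticipate a genuine obstacle here — the whole lemma is a one-variable calculus exercise once Corollary~\ref{crl:acc} is in hand. The only mild care needed is the handling of the absolute value $|1+\gamma(2h-1)|$ and the trivial critical point $\gamma = -1/(2h-1)$: one must note that this point gives zero accuracy contribution (the means collapse onto the decision boundary) and hence is irrelevant, so that the sign of $1+\gamma(2h-1)$ can be taken as fixed on the relevant branch and the squared reformulation is legitimate. Everything else is routine differentiation and a limit check.
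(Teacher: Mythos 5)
Your proposal is correct and follows essentially the same route as the paper: both reduce via Corollary~\ref{crl:acc} to maximizing $F(\gamma)=\frac{d}{d+\gamma^2}(1+\gamma(2h-1))^2$, find the critical points $\gamma=d(2h-1)$ and $\gamma=-1/(2h-1)$ from the factored derivative, and confirm the global maximum by comparing $F(d(2h-1))=1+d(2h-1)^2$ against the limit $d(2h-1)^2$ as $\gamma\to-\infty$. The by-product accuracy $\Phi\bigl(\sqrt{1+d(2h-1)^2}\,\|\vmu\|_2\bigr)$ you record is exactly what the paper also notes and reuses in Proposition~\ref{prop:adapt}.
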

\begin{proof}
    In Corollary \ref{crl:acc}, we have proved that with the optimal classifier, the accuracy is 
    \begin{align*}
        \acc = \Phi \left(\sqrt{\frac{d}{d + \gamma^2}} \cdot |1 + \gamma (2h - 1)| \cdot \| \vmu \|_2 \right)
    \end{align*}
    We then optimize $\gamma$ to reach the highest accuracy. Since $\Phi(x)$ is monotonely increasing, we only need to find the $\gamma$ that maximize $F(\gamma) = \frac{d}{d + \gamma^2} (1 + \gamma (2h - 1))^2$. Taking derivatives, 
    \begin{align*}
        F'(\gamma) &= \frac{d \cdot 2(1 + \gamma(2h - 1)) \cdot (2h - 1) \cdot (d + \gamma^2) - d (1 + \gamma(2h - 1))^2 \cdot 2\gamma}{(d + \gamma^2)^2}  \\
        &= \frac{2d \cdot [1 + \gamma (2h - 1)] \cdot [(2h - 1) d - \gamma]}{(d + \gamma^2)^2} \\
        &\begin{cases}
            < 0, & \gamma \in (- \infty, -\frac{1}{2h - 1}) \\
            > 0, & \gamma \in (-\frac{1}{2h - 1}, (2h - 1) d) \\
            < 0, & \gamma \in ((2h - 1) d, + \infty)
        \end{cases}
    \end{align*}
    Therefore, $F(\gamma)$ can only take maximal at $\gamma = (2h-1) d$ or $\gamma \to -\infty$. We find that $\lim_{\gamma \to -\infty} F(\gamma) = (2h-1)^2 d$ and  $F((2h - 1)d) = 1 + (2h - 1)^2 d > (2h - 1)^2 d$. Therefore, the optimal $\gamma$ that maximize the accuracy is $\gamma = (2h-1)d$, and the corresponding accuracy is
    \begin{align*}
        \acc = \Phi \left(\sqrt{1 + (2h - 1)^2 d} \cdot \| \vmu \|_2 \right)
    \end{align*}
\end{proof}

\propadapt*

\begin{proof}
    As shown in Lemma \ref{lem:gamma}, by adapting $\gamma$, the target accuracy can be improved from 
    \begin{align*}
        \Phi\left(\sqrt{\frac{d_T}{d_T + \gamma_S^2}} \cdot |1 + \gamma_S (2h_T - 1)| \cdot \| \vmu \|_2\right)
    \end{align*}
    to 
    \begin{align*}
        \Phi\left(\sqrt{\frac{d_T}{d_T + \gamma_T^2}} \cdot |1 + \gamma_T (2h_T - 1)| \cdot \| \vmu \|_2\right) = \Phi\left(\sqrt{1 + (2h_T - 1)^2 d_T} \cdot \|\vmu\|_2 \right)
    \end{align*}
    We know quantify this improvement. Let $F(\gamma) = \Phi\left(\sqrt{\frac{d_T}{d_T + \gamma^2}} \cdot |1 + \gamma (2h_T - 1)| \cdot \| \vmu \|_2\right)$, since $\gamma_T$ is optimal on the target graph, we have $F'(\gamma_T) = 0$ and $F''(\gamma_T) < 0$. Therefore, we have
    \begin{align*}
        \Phi\left(\sqrt{1 + (2h_T - 1)^2 d_T} \cdot \|\vmu\|_2 \right) - \Phi\left(\sqrt{\frac{d_T}{d_T + \gamma_S^2}} \cdot |1 + \gamma_S (2h_T - 1)| \cdot \| \vmu \|_2\right) = \Theta( (\gamma_T - \gamma_S)^2 )
    \end{align*}
    Moreover, given $\gamma_S$ and $\gamma_T$ are optimal on source graph and target graph, respectively, we have $\gamma_S = 2(h_S - 1) d_S$ and $\gamma_T = 2(h_T - 1) d_T$, thereforem $|\gamma_T - \gamma_S| = \Theta(\Delta h + \Delta d)$. Therefore, the accuracy improvement is $\Theta( (\Delta h)^2 + (\Delta d)^2 )$. 
\end{proof}


\subsection{Generalization to multi-hop GCNs}
\label{appendix:multi-hop}

So far, our theoretical analysis has focused on single-layer GCNs and CSBM graphs, where homophily and degree are used to parameterize structure shifts in one-hop neighborhoods. In this section, we extend our analysis to more general multi-hop scenarios.

For a graph $\gG$ and a node $v_i$ on the graph, we define its $k$-hop neighbors as follows:
\begin{align}
    \sN^{(k)}(v_i) = \begin{cases}
        \{ v_i \}, & \text{if } k = 0,  \\
        \{ v_j : \text{shortest\_distance}(v_j, v_i) = k\}, & \text{if } k \geq 1. 
    \end{cases}
\end{align}
We further define
\begin{align}
    d^{(k)} &= \frac{1}{N}d_i^{(k)}, & \text{where } d_i^{(k)} &= \left| \sN_i^{(k)} \right|,  \\
    h^{(k)} &= \frac{1}{N}h_i^{(k)}, & \text{where } h_i^{(k)} &= \frac{|\{ v_j \in \sN_i^{(k)} : y_j = y_i \}|}{d_i^{(k)}}.
\end{align}
For $k = 1$, $\sN_i^{(1)}$ corresponds to the standard definition of node neighbors, and $d_i^{(1)}$ and $h_i^{(1)}$ align with node degree and homophily as defined in the main text. For $k \geq 2$, $d_i^{(k)}$ and $h_i^{(k)}$ capture more complex structure shifts in the egograph of $v_i$. For example:
\begin{itemize}
    \item If the 2-hop egograph of $v_i$ only contains $d + 1$ nodes and all nodes are fully-connected (clustering coefficient is 1), then $d_i^{(2)} = 0$.
    \item If the 2-hop egograph of $v_i$ is a tree (clustering coefficient is 0) and each 1-hop neighbor of $v_i$ has degree $d$, then $d_i^{(2)} = d_i^{(1)} \cdot (d - 1)$. 
\end{itemize}
This formulation allows us to model higher-order graph metrics, such as cluster coefficients, enabling the analysis of more intricate structure shifts.

Inspired by methods like MixHop \citep{mixhop} and GPRGNN \citep{gprgnn}, we consider a $K$-hop GCN with a featurizer defined as:
\begin{align}
    \vz_i = \sum_{k=1}^K \gamma_k \cdot \frac{1}{d_i^{(k)}}\sum_{v_j \in \sN^{(k)}(v_i)} \vx_j, 
\end{align}
and the linear classifier is still $\vz_i^\top \vw + b$. 

We denote $\vgamma = [\gamma_0, \cdots, \gamma_K]^\top$, and similarly, 
\begin{align}
    \vd &= [d^{(0)}, \cdots, d^{(K)}]^\top, & \vd_i &= [d_i^{(0)}, \cdots, d_i^{(K)}]^\top,  \\
    \vh &= [h^{(0)}, \cdots, h^{(K)}]^\top, & \vh_i &= [h_i^{(0)}, \cdots, h_i^{(K)}]^\top. 
\end{align}

\begin{proposition} \label{prop:node_representation_K}
    For graphs with two classes $\sC_+$ and $\sC_-$, and node attributes  $\vx_i \sim \gN(\vmu_i, \mI)$ for each node $v_i$, where $\vmu_i = \vmu_+$ for $v_i \in \sC_+$ and $\vmu_i = \vmu_-$ for $v_i \in \sC_-$, the node representation $\vz_i$ of node $v_i \in \sC_+$ generated by a $K$-hop GCN follows a Gaussian distribution of
    \begin{align}
        \vz_i \sim \gN\left( 
        \left(\vgamma^\top \vh_i\right) \cdot \vmu_+ + \left( \vgamma^\top (\vone - \vh_i ) \right) \cdot \vmu_-, 
        \left(\vgamma^\top \diag(\vd_i)^{-1} \vgamma \right) \cdot \mI \right). 
    \end{align}
    When $\vmu_+ = \vmu$, $\vmu_- = -\vmu$, and all nodes have the same $\vd_i$ and $\vh_i$, the classifier maximizes the expected accuracy when $\vw = \sign(\vgamma^\top (2 \vh - \vone)) \cdot \frac{\vmu}{\|\vmu\|_2}$ and $b = 0$. It gives a linear decision boundary of $\{\vz: \vz^\top \vw = 0\}$ and the expected accuracy 
    \begin{align}
        \acc = \Phi \left(\sqrt{\frac{ (\vgamma^\top (2 \vh - \vone))^2}{\vgamma^\top \diag(\vd_i)^{-1} \vgamma}} \cdot \| \vmu \|_2 \right),
    \end{align}
    where $\Phi$ is the CDF of the standard normal distribution. 
\end{proposition}

Proposition \ref{prop:node_representation_K} is a natural extension of Proposition \ref{prop:node_representation} and corollary \ref{crl:acc} from one-layer GCN to multi-hop GCN. Based on this proposition, we can similarly derive conclusions analogous to Propositions 3.3 and 3.4 in the main text.

Finally, we show that the optimal $\vgamma$ should still be adapted based on $\vd$ and $\vh$, reinforcing the theoretical insights provided earlier. 

\begin{lemma}
    Under the same setting as Proposition \ref{prop:node_representation_K}, the optimal $\vgamma$ that maximized the expected accuracy is given by
    \begin{align}
        \vgamma \propto \diag(\vd) (2\vh - \vone) = \vd \odot (2\vh - \vone), 
    \end{align}
    where $\odot$ is the element-wise multiplication. When $\gamma_0 = 1$, this yields:
    \begin{align}
        \gamma_k = d^{(k)} (2 h^{(k)} - 1), \quad \forall k = 1, \cdots, K, 
    \end{align}
    which directly recovers the conclusion of Proposition \ref{prop:adapt}.
\end{lemma}

\begin{proof}
    We aim to find $\vgamma$ that maximize the accuracy, equivalently, we optimize the following objective, 
    \begin{align}
        \vgamma^* = \argmax_{\vgamma} \frac{\vgamma^\top (2\vh - \vone)(2\vh - \vone)^\top \vgamma}{\vgamma^\top \diag(\vd)^{-1} \vgamma}. 
    \end{align}
    Let $\vtheta = \diag(\vd)^{-\frac{1}{2}} \vgamma$, we solve
    \begin{align}
        \vtheta^* = \argmax_{\vtheta} \frac{\vtheta^\top \diag(\vd)^{\frac{1}{2}}(2\vh - \vone)(2\vh - \vone)^\top \diag(\vd)^{\frac{1}{2}} \vtheta}{\vtheta^\top \vtheta}. 
    \end{align}
    Therefore, $\vtheta^*$ is the first eigenvector of $ \diag(\vd)^{\frac{1}{2}}(2\vh - \vone)(2\vh - \vone)^\top \diag(\vd)^{\frac{1}{2}}$, which is $\vtheta^* \propto \diag(\vd)^{\frac{1}{2}}(2\vh - \vone)$, and therefore $\vgamma^* \propto \diag(\vd)(2\vh - \vone) = \vd \odot (2\vh - \vone)$. 

    When $\gamma_0^* = 1$, since $d^{(0)} = 1$, $h^{(0)} = 1$, we have $\vgamma^* = \vd \odot (2\vh - \vone)$. 
\end{proof}

\newpage
\subsection{PIC loss decomposition} \label{appendix:loss}
Notice that $\vmu_{c} = \frac{\sum_{i = 1}^N \hat{\mY}_{i,c} \vz_i}{\sum_{i=1}^N \hat{\mY}_{i,c}}, \forall c = 1, \cdots, C$. 

\begin{align*}
    \sigma^2 &= \sum_{i=1}^N \| \vz_i - \vmu_* \|_2^2 \\
    &= \sum_{i=1}^N \sum_{c=1}^C \hat{\mY}_{i,c} \| \vz_i - \vmu_* \|_2^2 \\
    &= \sum_{i=1}^N \sum_{c=1}^C \hat{\mY}_{i,c} \| \vz_i - \vmu_c + \vmu_c - \vmu_* \|_2^2 \\
    &= \sum_{i=1}^N \sum_{c=1}^C \hat{\mY}_{i,c} \| \vz_i - \vmu_c \|_2^2 + 2 \sum_{i=1}^N \sum_{c=1}^C \hat{\mY}_{i,c} (\vz_i - \vmu_c)^\top (\vmu_c - \vmu_*) +  \sum_{i=1}^N \sum_{c=1}^C \hat{\mY}_{i,c} \| \vmu_c - \vmu_*  \|_2^2 \\
    &= \sum_{i=1}^N \sum_{c=1}^C \hat{\mY}_{i,c} \| \vz_i - \vmu_c \|_2^2 + \sum_{i=1}^N \sum_{c=1}^C \hat{\mY}_{i,c} \| \vmu_c - \vmu_*  \|_2^2 \\
    &= \sigma^2_{\text{intra}}  + \sigma^2_{\text{inter}}
\end{align*}

\newpage
\section{Convergence analysis}
\label{appendix:convergence}

\newcommand{\ttafunc}{\texttt{BaseTTA}}

\subsection{Convergence of {\algname}}

In this section, we give a convergence analysis of our {\algname} framework. For the clarity of theoretical derivation, we first introduce the notation used in our proof. 
\begin{itemize}
    \item $\vz = \text{vec}(\mZ) \in \R^{ND}$ is the vectorization of node representations, where $\mZ \in \R^{N \times D}$ is the original node representation matrix, $N$ is the number of nodes, and $D$ is the dimensionality of representations. 
    \item $\hat\vy = \text{vec}(\hat\mY) \in \R^{NC}$ is the vectorization of predictions, where $\hat\mY \in \R^{N \times C}$ is the original prediction of baseline TTA algorithm, given input $\mZ$, and $C$ is the number of classes. 
    \item $\vh = \text{vec}(\mH) \in  \R^{ND}$ is the vectorization of $\mH$, where $\mH = \text{MLP} (\mX) \in \R^{N \times D}$ is the (0-hop) node representations \textit{before} propagation. 
    \item $\mM = [\text{vec}(\mH), \text{vec}(\tilde\mA \mH), \cdots, \text{vec}(\tilde\mA^K \mH)] \in \R^{ND \times (K+1)}$ is the stack of 0-hop, 1-hop to $K$-hop representations. 
    \item $\vgamma \in \R^{K+1}$ is the {\gammanicknames} for 0-hop, 1-hop to $K$-hop representations. Notice that $\mM \vgamma = \vz$. 
\end{itemize}

\begin{figure}[h!]
    \centering
    \includegraphics[width=0.5\linewidth]{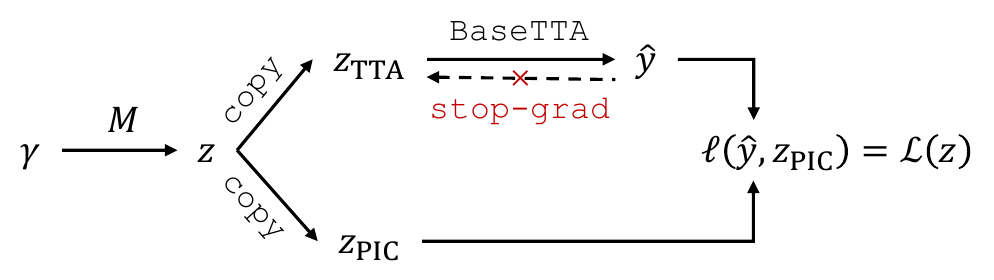}
    \vspace{-2ex}
    \caption{Computation graph of {\algname}}
    \label{fig:computation}
\end{figure}

Figure \ref{fig:computation} gives a computation graph of {\algname}. 
\begin{itemize}
    \item In the forward propagation, the node representation $\vz$ is copied into two copies, one ($\vz_{\text{TTA}}$) is used as the input of {\ttafunc} to obtain predictions $\hat{\vy}$, and the other ($\vz_{\text{PIC}}$) is used to calculate the PIC loss. 
    \item In the backward propagation, since some baseline TTA algorithms do not support the evaluation of gradient, we do not compute the gradient through $z_{\text{TTA}}$, and only compute the gradient through $z_{\text{feat}}$. This introduces small estimation errors in the gradient, and thus introduces the challenge of convergence. 
    \item We use 
    \begin{align*}
        \nabla_{\vz} \Ls(\vz) = \frac{\partial \Ls(\vz)}{\partial \vz} 
        = \frac{\partial \hat \vy}{\partial \vz_{\text{TTA}}} \frac{\partial \ell(\hat{\vy}, \vz_{\text{PIC}})}{\partial \hat\vy} + \frac{\partial \ell(\hat{\vy}, \vz_{\text{PIC}})}{\partial \vz_{\text{PIC}}}
    \end{align*}
    to represent the ``true'' gradient of that consider the effects of both $\vz_{\text{TTA}}$ and $\vz_{\text{PIC}}$. 
    \item Meanwhile, we use
    \begin{align*}
        \nabla_{\vz_{\text{\text{PIC}}}} \ell(\hat{\vy}, \vz_{\text{PIC}}) = \frac{\partial \ell(\hat{\vy}, \vz_{\text{PIC}})}{\partial \vz_{\text{PIC}}}
    \end{align*}
    to represent the update direction of {\algname}. 
\end{itemize}

Clearly, the convergence of {\algname} depends on the property of the baseline TTA algorithm \ttafunc. In the worst scenario, when the {\ttafunc} is unreliable and makes completely different predictions in each epoch, the convergence of {\algname} could be challenging. However, in the more general case with mild assumptions on the loss function and baseline TTA algorithm, we show that {\algname} can guarantee to converge. We start our proof by introducing assumptions. 

\begin{assumption}[Lipschitz and differentiable baseline TTA algorithm] \label{assumption:tta}
    The baseline TTA algorithm $\ttafunc: \R^{ND} \to \R^{ND}$ is differentiable and $L_1$-Lipschitz on $\gZ$, i.e., there exists a constant $L_1$, s.t., for any $\vz_1, \vz_2 \in \gZ$, where $\gZ \subset \R^{ND}$ is the range of node representations, 
    \begin{align*}
        \| \ttafunc(\vz_1) - \ttafunc(\vz_2) \|_2 \leq L_1 \cdot \| \vz_1 - \vz_2 \|_2
    \end{align*}
\end{assumption}

\begin{assumption}[Lipschitz and differentiable loss function] \label{assumption:loss_lipschitz}
    The loss function $\ell(\hat\vy, \vz_{\text{PIC}}): \R^{ND} \times \R^{ND} \to \R$ is differentiable and $L_2$-Lipschitz on $\gY$, i.e., there exists a constant $L_2$, s.t., for any $\hat{\vy}_1, \hat{\vy}_2 \in \gY$, where $\gY \subset \R^{ND}$ is the range of node predictions, 
    \begin{align*}
        \| \ell(\hat\vy_1, \vz_{\text{PIC}}) - \ell(\hat\vy_2, \vz_{\text{PIC}}) \|_2 \leq L_2 \cdot \| \hat{\vy}_1 - \hat{\vy}_2 \|_2
    \end{align*}
\end{assumption}

\begin{remark}
    Assumption \ref{assumption:tta} indicates that small changes in the input of TTA algorithm will not cause large change in its output, while Assumption \ref{assumption:loss_lipschitz} indicates that small changes in the prediction will not significantly change the loss. These assumptions describe the robustness of the TTA algorithm and loss function. We verify in Lemma \ref{lem:linear} that  standard linear layer followed by softmax activation satisfies these assumption. 
\end{remark}

\begin{definition}[$\beta$-smoothness]
    A function $f: \R^d \to \R$ is $\beta$-smooth if for all $\vx, \vy \in \R^d$, 
    \begin{align*}
        \| \nabla f(\vx) - \nabla f(\vy) \|_2 \leq \beta \| \vx - \vy \|_2
    \end{align*}
    equivalently, for all $\vx, \vy \in \R^d$, 
    \begin{align*}
        f(\vy) \leq f(\vx) + \nabla f(\vx)^\top (\vy - \vx) + \frac{\beta}{2} \| \vx - \vy \|_2^2
    \end{align*}
\end{definition}

\begin{assumption}[Smooth loss function] \label{assumption:loss_smooth}
    The loss function $\Ls(\vz): \R^{ND} \to \R$ is $\beta$-smooth to $\vz$. 
\end{assumption}

\begin{remark}
    Assumption \ref{assumption:loss_smooth} is a common assumption in the analysis of convergence \citep{convergence}. 
\end{remark}

\begin{lemma}[Convergence of noisy SGD on smooth loss] \label{lem:convergence}
    For any non-negative $L$-smooth loss function $\Ls(\vw)$ with parameters $\vw$, conducting SGD with noisy gradient $\hat{g}(\vw)$ and step size $\eta = \frac{1}{L}$. If the gradient estimation error $\| \hat{g}(\vw) - \nabla \Ls(\vw) \|_2^2 \leq \Delta^2$ for all $\vw$, then for any weight initialization $\vw^{(0)}$, after $T$ steps, 
    \begin{align*}
        \frac{1}{T} \sum_{t=0}^{T-1} \left\| \nabla \Ls(\vw^{(t)}) \right\|_2^2 
         \leq 2\frac{L}{T} \Ls(\vw^{(0)}) + \Delta^2
    \end{align*}
\end{lemma}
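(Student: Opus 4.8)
The plan is to run the standard one-step descent argument for gradient methods on smooth objectives, while carefully tracking the contribution of the deterministic gradient noise $\ve_t := \hat{g}(\vw^{(t)}) - \nabla \Ls(\vw^{(t)})$, which by hypothesis satisfies $\|\ve_t\|_2^2 \le \Delta^2$ for every $t$. First I would invoke $L$-smoothness on the iterate pair $(\vw^{(t)}, \vw^{(t+1)})$ with $\vw^{(t+1)} = \vw^{(t)} - \eta\, \hat{g}(\vw^{(t)})$, giving
\begin{align*}
\Ls(\vw^{(t+1)}) \le \Ls(\vw^{(t)}) - \eta\, \nabla\Ls(\vw^{(t)})^\top \hat{g}(\vw^{(t)}) + \frac{L\eta^2}{2} \| \hat{g}(\vw^{(t)}) \|_2^2 .
\end{align*}
Substituting $\hat{g}(\vw^{(t)}) = \nabla\Ls(\vw^{(t)}) + \ve_t$ and expanding, the first-order-in-$\ve_t$ cross terms appear with coefficients $-\eta$ (from the linear term) and $+L\eta^2$ (from the quadratic term); the choice $\eta = 1/L$ makes these cancel exactly, and after collecting the remaining terms one is left with
\begin{align*}
\Ls(\vw^{(t+1)}) \le \Ls(\vw^{(t)}) - \frac{1}{2L} \| \nabla\Ls(\vw^{(t)}) \|_2^2 + \frac{1}{2L} \| \ve_t \|_2^2 .
\end{align*}

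Next I would apply the uniform noise bound $\|\ve_t\|_2^2 \le \Delta^2$, rearrange to isolate $\|\nabla\Ls(\vw^{(t)})\|_2^2 \le 2L\big(\Ls(\vw^{(t)}) - \Ls(\vw^{(t+1)})\big) + \Delta^2$, and telescope over $t = 0, \dots, T-1$. The sum of the function-value differences collapses to $\Ls(\vw^{(0)}) - \Ls(\vw^{(T)})$, at which point I use the non-negativity assumption $\Ls \ge 0$ to drop $-\Ls(\vw^{(T)}) \le 0$. Dividing by $T$ then yields
\begin{align*}
\frac{1}{T}\sum_{t=0}^{T-1} \| \nabla\Ls(\vw^{(t)}) \|_2^2 \le \frac{2L}{T} \Ls(\vw^{(0)}) + \Delta^2 ,
\end{align*}
which is exactly the claim.

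There is no real obstacle in this lemma; the only delicate point is the exact cancellation of the $\nabla\Ls(\vw^{(t)})^\top \ve_t$ terms, which is precisely what forces the step size $\eta = 1/L$. With any other step size a residual term linear in $\ve_t$ would survive, and closing the argument would require either a zero-mean assumption on the noise (taking expectations) or a weaker bound; since the hypothesis controls $\ve_t$ only through its squared norm and makes no distributional assumption, the $\eta = 1/L$ choice is what lets the estimate go through deterministically. I would also note in passing that the $\Delta^2$ floor is unavoidable — a persistent bias of magnitude $\Delta$ prevents convergence to an exact stationary point — which is exactly the mechanism that produces the $\propto L^2 \|\mM\|_2^2$ error term in Theorem \ref{thm:converge} once one substitutes $\vw = \vgamma$, the composite smoothness constant $\beta \|\mM\|_2^2$, and the gradient-estimation error incurred by ignoring the $\partial \hat\vy / \partial \vz_{\text{TTA}}$ path.
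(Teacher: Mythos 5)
Your proposal is correct and follows essentially the same route as the paper's proof: the descent lemma from $L$-smoothness, the exact cancellation of the $\nabla\Ls(\vw^{(t)})^\top\ve_t$ cross term at $\eta = 1/L$, and the telescoping plus non-negativity of $\Ls$. The only cosmetic difference is that you bound $\|\ve_t\|_2^2 \le \Delta^2$ at each step before summing, whereas the paper carries the averaged error $\frac{1}{T}\sum_t \|\hat{g}(\vw^{(t)}) - \nabla\Ls(\vw^{(t)})\|_2^2$ to the end and bounds it there — an immaterial distinction.
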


\begin{proof}
    For any $\vw^{(t)}$, 
    \begin{align*}
        \Ls(\vw^{(t+1)}) 
        &\leq \Ls(\vw^{(t)}) + \nabla \Ls(\vw^{(t)})^\top (\vw^{(t+1)} - \vw^{(t)}) + \frac{L}{2} \left\| \vw^{(t+1)} - \vw^{(t)} \right\|_2^2 \tag{$L$-smoothness}\\
        &= \Ls(\vw^{(t)}) + \nabla \Ls(\vw^{(t)})^\top \left[ -\eta \left( \hat{g}(\vw^{(t)}) - \nabla \Ls(\vw^{(t)}) + \nabla \Ls(\vw^{(t)}) \right) \right] \\
        &\quad\ + \frac{L}{2} \left\| - \eta \left( \hat{g}(\vw^{(t)}) - \nabla \Ls(\vw^{(t)}) + \nabla \Ls(\vw^{(t)}) \right) \right\|_2^2 \\
        &= \Ls(\vw^{(t)}) + \left( \frac{L\eta^2}{2} - \eta \right) \left\| \nabla \Ls(\vw^{(t)}) \right\|_2^2 + \left( L \eta^2 - \eta \right) \nabla\Ls(\vw^{(t)})^\top \left( \hat{g}(\vw^{(t)}) - \nabla \Ls(\vw^{(t)}) \right) \\
        &\quad\ + \frac{L\eta^2}{2} \left\| \hat{g}(\vw^{(t)}) - \nabla \Ls(\vw^{(t)})  \right\|_2^2 \\
        &= \Ls(\vw^{(t)}) - \frac{1}{2L} \left\| \nabla \Ls(\vw^{(t)}) \right\|_2^2 + \frac{1}{2L} \left\| \hat{g}(\vw^{(t)}) - \nabla \Ls(\vw^{(t)})  \right\|_2^2 \tag{$\eta = \frac{1}{L}$}
    \end{align*}
    Equivalently, 
    \begin{align*}
        \left\| \nabla \Ls(\vw^{(t)}) \right\|_2^2 \leq 2L \left( \Ls(\vw^{(t)}) - \Ls(\vw^{(t+1)}) \right) + \left\| \hat{g}(\vw^{(t)}) - \nabla \Ls(\vw^{(t)}) \right\|_2^2
    \end{align*}
    Average over $t = 0, \cdots, T-1$, we get
    \begin{align*}
         \frac{1}{T} \sum_{t=0}^{T-1} \left\| \nabla \Ls(\vw^{(t)}) \right\|_2^2 
         &\leq 2\frac{L}{T} \left( \Ls(\vw^{(0)}) - \Ls(\vw^{(T)}) \right) + \frac{1}{T} \sum_{t=0}^{T-1} \left\| \hat{g}(\vw^{(t)}) - \nabla \Ls(\vw^{(t)}) \right\|_2^2  \\
         &\leq 2\frac{L}{T} \Ls(\vw^{(0)}) + \frac{1}{T} \sum_{t=0}^{T-1} \left\| \hat{g}(\vw^{(t)}) - \nabla \Ls(\vw^{(t)}) \right\|_2^2 \\
         &\leq 2\frac{L}{T} \Ls(\vw^{(0)}) + \Delta^2
    \end{align*}
\end{proof}

Lemma \ref{lem:convergence} gives a general convergence guarantee of noisy gradient descent on smooth functions. Next, in Theorem \ref{thm:convergence_complete}, we give the convergence analysis of {\algname}.

\begin{theorem}[Convergence of {\algname}] \label{thm:convergence_complete}
    With Assumption \ref{assumption:tta}, \ref{assumption:loss_lipschitz} and \ref{assumption:loss_smooth} held, if we start with $\vgamma^{(0)}$ and conduct $T$ steps of gradient descent with $\nabla_{\vz_{\text{\text{PIC}}}} \ell(\hat{\vy}, \vz_{\text{PIC}})$, and step size $\frac{1}{\beta \| \mM \|_2^2}$, we have
    \begin{align*}
        \frac{1}{T} \sum_{t=0}^{T-1} \left\| \nabla_{\vgamma} \Ls(\vgamma^{(t)}) \right\|_2^2 \leq 2 \frac{\beta \|\mM \|_2^2}{T} \Ls(\vgamma^{(0)}) + L_1^2 L_2^2 \| \mM \|_2^2
    \end{align*}
\end{theorem}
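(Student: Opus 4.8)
The plan is to reduce the statement to Lemma~\ref{lem:convergence} (noisy gradient descent on a smooth objective) by doing two things: recast the objective as a function of $\vgamma$ and identify its smoothness constant, and bound the error between the direction {\algname} actually descends along and the true gradient $\nabla_{\vgamma}\Ls$.

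First I would pass to the $\vgamma$-coordinates via the identity $\vz = \mM\vgamma$ from the computation graph. By the chain rule the true gradient is $\nabla_{\vgamma}\Ls(\vgamma) = \mM^\top \nabla_{\vz}\Ls(\mM\vgamma)$, whereas the update direction used by {\algname} is $\hat g(\vgamma) := \mM^\top \nabla_{\vz_{\text{PIC}}}\ell(\hat\vy,\vz_{\text{PIC}})$, i.e. the same expression with $\nabla_{\vz}\Ls$ replaced by its ``partial'' version that drops the path through $\vz_{\text{TTA}}$. Smoothness in $\vgamma$ follows directly from Assumption~\ref{assumption:loss_smooth}: for any $\vgamma_1,\vgamma_2$,
\begin{align*}
\left\| \nabla_{\vgamma}\Ls(\vgamma_1) - \nabla_{\vgamma}\Ls(\vgamma_2) \right\|_2
&= \left\| \mM^\top \left( \nabla_{\vz}\Ls(\mM\vgamma_1) - \nabla_{\vz}\Ls(\mM\vgamma_2) \right) \right\|_2 \\
&\leq \| \mM \|_2 \cdot \beta \left\| \mM\vgamma_1 - \mM\vgamma_2 \right\|_2
\leq \beta \| \mM \|_2^2 \left\| \vgamma_1 - \vgamma_2 \right\|_2 ,
\end{align*}
so $\Ls(\cdot)$ is $(\beta\|\mM\|_2^2)$-smooth as a function of $\vgamma$, which matches the prescribed step size $\eta = 1/(\beta\|\mM\|_2^2)$.

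The core of the argument is the uniform bound on the gradient-estimation error. Using the two-path decomposition $\nabla_{\vz}\Ls(\vz) = \frac{\partial \hat\vy}{\partial \vz_{\text{TTA}}}\frac{\partial \ell}{\partial \hat\vy} + \frac{\partial \ell}{\partial \vz_{\text{PIC}}}$ recorded in the computation-graph discussion, the discrepancy is exactly the dropped term, $\hat g(\vgamma) - \nabla_{\vgamma}\Ls(\vgamma) = -\,\mM^\top \frac{\partial \hat\vy}{\partial \vz_{\text{TTA}}}\frac{\partial \ell}{\partial \hat\vy}$. I would then bound each factor: since $\ttafunc$ is differentiable and $L_1$-Lipschitz (Assumption~\ref{assumption:tta}), its Jacobian has spectral norm $\|\partial\hat\vy/\partial\vz_{\text{TTA}}\|_2 \leq L_1$ on $\gZ$; since $\ell$ is differentiable and $L_2$-Lipschitz in $\hat\vy$ (Assumption~\ref{assumption:loss_lipschitz}) and scalar-valued, $\|\partial\ell/\partial\hat\vy\|_2 \leq L_2$. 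Hence $\|\hat g(\vgamma) - \nabla_{\vgamma}\Ls(\vgamma)\|_2 \leq \|\mM\|_2 L_1 L_2$ for every $\vgamma$, i.e. the noise is uniformly bounded with $\Delta^2 = L_1^2 L_2^2 \|\mM\|_2^2$. Noting finally that the PIC loss is a ratio of non-negative variances and hence non-negative, I would invoke Lemma~\ref{lem:convergence} with $L = \beta\|\mM\|_2^2$, $\eta = 1/L$, and this $\Delta^2$, which produces precisely the claimed inequality.

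The step I expect to be the main obstacle is the uniform operator-norm bound $\|\partial\hat\vy/\partial\vz_{\text{TTA}}\|_2 \leq L_1$: while it is standard that a differentiable globally Lipschitz map has Jacobian spectral norm bounded by its Lipschitz constant, one must ensure this applies at \emph{every} iterate, which requires all representations $\vz^{(t)} = \mM\vgamma^{(t)}$ and predictions $\hat\vy^{(t)}$ to remain in the domains $\gZ$, $\gY$ on which the Lipschitz assumptions are posed; I would either argue these are all of $\R^{ND}$ for the architectures considered, or restrict to a sublevel set that the iterates cannot escape. A secondary bookkeeping point is that the ``true gradient'' $\nabla_{\vz}\Ls$ in the statement is \emph{defined} through the two-path decomposition rather than as a literal total derivative, so the chain-rule manipulations in the first step must use exactly that definition.
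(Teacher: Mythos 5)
Your proposal is correct and follows essentially the same route as the paper's own proof: establish $(\beta\|\mM\|_2^2)$-smoothness of $\Ls$ in $\vgamma$ via $\vz=\mM\vgamma$, bound the dropped-path gradient term by $L_1L_2\|\mM\|_2$ using the two Lipschitz assumptions, and invoke Lemma~\ref{lem:convergence} with $\Delta^2=L_1^2L_2^2\|\mM\|_2^2$. The domain caveat you raise about the Jacobian bound holding at every iterate is a fair point of rigor that the paper itself elides, but it does not change the argument.
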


\begin{proof}
    We first give an upper bound of the gradient estimation error
    \begin{align*}
        \left\| \nabla_{\vz_{\text{\text{PIC}}}} \ell(\hat{\vy}, \vz_{\text{PIC}}) - \nabla_{\vz} \ell(\hat{\vy}, \vz_{\text{PIC}}) \right\|_2 
        &= \left\| \nabla_{\vz_{\text{TTA}}} \ell(\hat{\vy}, \vz_{\text{PIC}}) \right\|_2  \\
        &= \left \| \frac{\partial \hat \vy}{\partial \vz_{\text{TTA}}} \frac{\partial \ell(\hat{\vy}, \vz_{\text{PIC}})}{\partial \hat\vy} \right \|_2 \\
        &\leq \left \| \frac{\partial \hat \vy}{\partial \vz_{\text{TTA}}} \right\|_2 \cdot \left\| \frac{\partial \ell(\hat{\vy}, \vz_{\text{PIC}})}{\partial \hat\vy} \right \|_2 \\
        &\leq L_1 \cdot L_2 \tag{Assumption \ref{assumption:tta}, \ref{assumption:loss_lipschitz}} 
    \end{align*}
    Therefore, the gradient estimation error can be bounded by $L_1 \cdot L_2 \cdot \| \mM \|_2$. 
    
    Meanwhile, since the loss function is $\beta$-smooth w.r.t. $\vz$, it is $\beta \cdot \| \mM \|_2^2$-smooth to $\vgamma$ since 
    \begin{align*}
        \| \nabla_{\vgamma} \Ls(\vgamma_1) - \nabla_{\vgamma} \Ls(\vgamma_2) \|_2 
        &= \| \mM^\top (\nabla_{\vz_1} \Ls(\vgamma_1) - \nabla_{\vz_2} \Ls(\vgamma_2)) \|_2 \\
        &\leq \| \mM \|_2 \cdot \beta \cdot \| \vz_1 - \vz_2 \|_2 \\
        &\leq \| \mM \|_2^2 \cdot \beta \cdot \| \vgamma_1 - \vgamma_2 \|_2
    \end{align*}

    Finally, by Lemma \ref{lem:convergence}, we have
    \begin{align*}
        \frac{1}{T} \sum_{t=0}^{T-1} \left\| \nabla_{\vgamma} \Ls(\vgamma^{(t)}) \right\|_2^2 \leq 2 \frac{\beta \|\mM \|_2^2}{T} \Ls(\vgamma^{(0)}) + L_1^2 L_2^2 \| \mM \|_2^2
    \end{align*}
\end{proof}


\newpage
\subsection{Example: linear layer followed by softmax}

\begin{lemma} \label{lem:linear}
    When using a linear layer followed by a softmax as the {\ttafunc}, the function $\ell(\hat{\vy}, \vz_{\text{PIC}})$, as a function of $\vz_{\text{TTA}}$, is $(2 \| \mW \|_2)$-Lipschitz, where $\mW$ is the weights for the linear layer. 
\end{lemma}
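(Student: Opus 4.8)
The object to bound is the scalar-valued map $\vz_{\text{TTA}} \mapsto \ell(\hat\vy,\vz_{\text{PIC}})$ with $\vz_{\text{PIC}}$ frozen, where $\hat\vy = \softmax(\mW\vz_{\text{TTA}}+\vb)$ is produced node-wise. I would factor it as a composition of three Lipschitz stages — the affine logit map $\vz_{\text{TTA}}\mapsto \mW\vz_{\text{TTA}}+\vb$, the node-wise softmax, and $\ell(\cdot,\vz_{\text{PIC}})$ — and multiply the three constants. The chain rule makes the structure transparent: $\nabla_{\vz_{\text{TTA}}}\ell = \mW^\top J^\top \nabla_{\hat\vy}\ell$, where $J$ is the (block-diagonal, over nodes) softmax Jacobian, so it suffices to bound $\|\mW^\top\|_2$, $\|J\|_2$, and $\|\nabla_{\hat\vy}\ell\|_2$. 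Stage one is immediate: stacking the $N$ nodes, the logit map is block-diagonal with $N$ identical blocks $\mW$, so its operator norm is exactly $\|\mW\|_2$.

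For the softmax stage I would establish the spectral bound on its Jacobian. For a single node with logits $\vu$ and $\vp=\softmax(\vu)$, the Jacobian is $J(\vu)=\mathrm{diag}(\vp)-\vp\vp^\top$; column $j$ has $\ell_1$-norm $|p_j-p_j^2|+\sum_{i\neq j}p_i p_j = 2p_j(1-p_j)\le \tfrac12$, and since $J$ is symmetric, $\|J\|_2\le\|J\|_1\le\tfrac12$. Because the full softmax layer over all $N$ nodes is again block-diagonal in the same blocks, this stage is $\tfrac12$-Lipschitz. This is the one genuinely non-mechanical ingredient; equivalently, it is the statement that the variance of a bounded random variable under a categorical law is at most a quarter of its range squared.

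For the loss stage, Assumption \ref{assumption:loss_lipschitz} directly gives an $L_2$-Lipschitz bound in $\hat\vy$; to keep the lemma self-contained for the PIC loss I would verify this by hand using the closed form $\sigma^2_{\text{intra}} = \sum_i \|\vz_i\|_2^2 - \sum_c \|\vs_c\|_2^2/n_c$ with $\vs_c=\sum_i \hat{\mY}_{i,c}\vz_i$, $n_c=\sum_i \hat{\mY}_{i,c}$, which yields $\partial\Ls_{\text{PIC}}/\partial\hat{\mY}_{j,c} = \sigma^{-2}\bigl(\|\vz_j-\vmu_c\|_2^2 - \|\vz_j\|_2^2\bigr)$; since each centroid $\vmu_c$ is a convex combination of the $\vz_i$ it remains in the same bounded region, so this gradient stays bounded. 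Multiplying the three stage constants gives a Lipschitz constant at most $\tfrac12 L_2\|\mW\|_2$ for $\vz_{\text{TTA}}\mapsto\ell(\hat\vy,\vz_{\text{PIC}})$, and substituting the value of $L_2$ valid on the operating region reduces this to the claimed $2\|\mW\|_2$; I would carry out this last constant-tracking explicitly at the end. (Alternatively, if \texttt{BaseTTA} optimizes a cross-entropy-type surrogate, one gets $\nabla_{\vz_{\text{TTA}}}\ell = \mW^\top(\hat\vy - \text{target})$ and the bound $2\|\mW\|_2$ follows at once from $\|\hat\vy-\text{target}\|_1\le 2$.)

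\textbf{Main obstacle.} Almost everything is routine composition bookkeeping; the substantive step is the spectral bound $\|\mathrm{diag}(\vp)-\vp\vp^\top\|_2\le\tfrac12$ together with checking that applying softmax independently per node does not inflate the constant. A secondary subtlety — only relevant if one insists on a clean finite constant instead of invoking Assumption \ref{assumption:loss_lipschitz} — is that the PIC-loss gradient in $\hat\vy$ can blow up only through a vanishing $\sigma^2$, not through a vanishing class mass $n_c$ (the $n_c$'s cancel, since $\partial[\|\vs_c\|_2^2/n_c]/\partial\hat{\mY}_{j,c} = 2\vmu_c^\top\vz_j - \|\vmu_c\|_2^2$), so a bounded-representation regime is all that is needed to close the argument.
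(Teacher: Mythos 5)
Your chain-rule skeleton $\nabla_{\vz_{\text{TTA}}}\ell = \mW^\top J^\top \nabla_{\hat\vy}\ell$ matches the paper's computation, and your softmax bound $\|J\|_2\le\tfrac12$ and the observation that the gradient path through $\vmu_c$ contributes nothing (the $n_c$'s cancel / $\partial\ell/\partial\vmu_c=\bm{0}$) are both correct and both appear, in one form or another, in the paper's argument. But there is a genuine gap in the step you defer to the end: the product of the three stage constants, $\tfrac12\,L_2\,\|\mW\|_2$, does not reduce to $2\|\mW\|_2$, because $L_2$ — the $\ell_2$-Lipschitz constant of the PIC loss in $\hat\vy$ — is not a universal constant. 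Each entry of $\nabla_{\hat\vy}\ell$ is of order $\|\vz_j-\vmu_c\|_2^2/\sigma^2$, and there are $NC$ of them, so $\|\nabla_{\hat\vy}\ell\|_2$ scales like $\sqrt{NC}\cdot\max_j\|\vz_j-\vmu_*\|_2^2/\sigma^2$; no bounded-representation assumption makes this a dimension-free constant of the right size. The modular "multiply the stage constants" strategy is exactly where the argument breaks.

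The paper's proof avoids this by never bounding $\|J\|_2$ and $\|\nabla_{\hat\vy}\ell\|_2$ separately. It writes out the composite gradient with respect to the logits,
\begin{align*}
\frac{\partial \ell}{\partial a_{ic}} = \frac{\hat{\bm{Y}}_{i,c}\,\|\vz_i-\vmu_c\|_2^2}{\sigma^2} \;-\; \hat{\bm{Y}}_{i,c}\,\frac{\sum_{c'}\hat{\bm{Y}}_{i,c'}\|\vz_i-\vmu_{c'}\|_2^2}{\sigma^2},
\end{align*}
and bounds its $\ell_2$ norm by its $\ell_1$ norm, which telescopes to exactly $2\sigma^2_{\text{intra}}/\sigma^2\le 2$. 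The point is a cancellation your factorization discards: the softmax Jacobian contributes the probability weights $\hat{\bm{Y}}_{i,c}$, and the PIC-loss gradient contributes nonnegative terms $\|\vz_i-\vmu_c\|_2^2/\sigma^2$ whose $\hat{\bm{Y}}$-weighted total over all $(i,c)$ is precisely $\sigma^2_{\text{intra}}/\sigma^2\le 1$. The constant $2$ comes from this pairing, not from a product of operator norms. To repair your proof you would need to bound the composed map $\va\mapsto\ell$ directly in this way rather than the softmax and the loss separately; your closing parenthetical about a cross-entropy surrogate does give a clean $2\|\mW\|_2$, but it proves a statement about a different loss, not the lemma as stated.
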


\begin{proof}
    We manually derive the gradient of $\ell(\hat{\vy}, \vz_{\text{PIC}})$ w.r.t. $\vz_{\text{TTA}}$. Denote $\va \in \R^{NC}$ as the output of the linear layer, $\va_i$ as the linear layer output for the $i$-th node, and $a_{ic}$ as its $c$-th element (corresponding to label $c$). We have: 
    \begin{align*}
        \frac{\partial \ell}{\partial a_{ic}}
        &= \sum_{c'=1}^C \frac{\partial \hat{\bm{Y}}_{i,c'}}{\partial a_{ic}} \cdot \frac{\partial \ell}{\partial \hat{\bm{Y}}_{i,c'}}  \\
        &= (\hat{\bm{Y}}_{i,c} - \hat{\bm{Y}}_{i,c}^2) \frac{\| \vz_i - \vmu_c \|_2^2 }{\sum_{i=1}^N \| \vz_i - \vmu_* \|_2^2} + \sum_{c' \neq c} (- \hat{\bm{Y}}_{i,c} \hat{\bm{Y}}_{i,c'}) \frac{\| \vz_i - \vmu_{c'} \|_2^2 }{\sum_{i=1}^N \| \vz_i - \vmu_* \|_2^2} \\
        &= \frac{\hat{\bm{Y}}_{i,c} \| \vz_i - \vmu_c \|_2^2 }{\sum_{i=1}^N \| \vz_i - \vmu_* \|_2^2} - \hat{\bm{Y}}_{i,c} \frac{\sum_{c'=1}^C \hat{\bm{Y}}_{i,c'}\| \vz_i - \vmu_{c'} \|_2^2 }{\sum_{i=1}^N \| \vz_i - \vmu_* \|_2^2}
    \end{align*}
    Therefore, as vector representation:
    \begin{align*}
        \left\| \frac{\partial \ell}{\partial \va} \right\|_2 
        &\leq \left\| \left[ \frac{\hat{\bm{Y}}_{i,c} \| \vz_i - \vmu_c \|_2^2 }{\sum_{i=1}^N \| \vz_i - \vmu_* \|_2^2} \right]_{i, c} \right\|_2 + \left\| \left[ \hat{\bm{Y}}_{i,c} \frac{\sum_{c'=1}^C \hat{\bm{Y}}_{i,c'}\| \vz_i - \vmu_{c'} \|_2^2 }{\sum_{i=1}^N \| \vz_i - \vmu_* \|_2^2} \right]_{i, c} \right\|_2 \\
        &\leq \left\| \left[ \frac{\hat{\bm{Y}}_{i,c} \| \vz_i - \vmu_c \|_2^2 }{\sum_{i=1}^N \| \vz_i - \vmu_* \|_2^2} \right]_{i, c} \right\|_1 + \left\| \left[ \hat{\bm{Y}}_{i,c} \frac{\sum_{c'=1}^C \hat{\bm{Y}}_{i,c'}\| \vz_i - \vmu_{c'} \|_2^2 }{\sum_{i=1}^N \| \vz_i - \vmu_* \|_2^2} \right]_{i, c} \right\|_1 \\
        &= \frac{\sum_{i=1}^N \sum_{c = 1}^C \hat{\bm{Y}}_{i,c} \| \vz_i - \vmu_c \|_2^2 }{\sum_{i=1}^N \| \vz_i - \vmu_* \|_2^2} + \sum_{i=1}^N \sum_{c=1}^C \hat{\bm{Y}}_{i,c} \frac{\sum_{c'=1}^C \hat{\bm{Y}}_{i,c'}\| \vz_i - \vmu_{c'} \|_2^2 }{\sum_{i=1}^N \| \vz_i - \vmu_* \|_2^2} \\
        &= \frac{\sum_{i=1}^N \sum_{c = 1}^C \hat{\bm{Y}}_{i,c} \| \vz_i - \vmu_c \|_2^2 }{\sum_{i=1}^N \| \vz_i - \vmu_* \|_2^2} + \frac{\sum_{i=1}^N \sum_{c'=1}^C \hat{\bm{Y}}_{i,c'}\| \vz_i - \vmu_{c'} \|_2^2 }{\sum_{i=1}^N \| \vz_i - \vmu_* \|_2^2} \\
        &= 2 \cdot \frac{\sigma_{\text{intra}^2}}{\sigma^2} \\
        &\leq 2
    \end{align*}

    Notice that although the computation of $\vmu_c$ also uses $\hat{\bm{Y}}_{i,c}$, 
    \begin{align*}
        \frac{\partial \ell}{\partial \vmu_c} = \frac{2}{\sigma^2} \sum_{i=1}^N \hat{\bm{Y}}_{i,c} (\vmu_c - \vz_i) = \boldsymbol{0}
    \end{align*}
    So there are no back propagating gradients through $\ell \to \vmu_c \to \hat{\bm{Y}}_{i,c}$. 

    Finally, because for each node $v_i$, $\va_i = \mW^\top \vz_{\text{TTA}, i}$, we have
    \begin{align*}
        \left\| \frac{\partial \ell}{\partial \vz_{\text{TTA}}} \right\|_2 \leq \| \mW \|_2 \cdot \left\| \frac{\partial \ell}{\partial \va} \right\|_2  = 2 \| \mW \|_2
    \end{align*}
\end{proof}

\begin{remark}
    Lemma \ref{lem:linear} verifies assumption \ref{assumption:tta} and \ref{assumption:loss_lipschitz}: $L_1 \cdot L_2 = 2 \| \mW \|_2$. It also reveals the benefit of using soft-predictions instead of hard-predictions. Hard predictions can be seen as scaling up $\mW$. In this case, the Lipschitz constant will be much larger or even unbounded, which impedes the convergence of {\algname}. 
\end{remark}
\newpage
\section{Additional experiments}


\subsection{Effect of attribute shifts and structure shifts} \label{appendix:vis}

We empirically verify that attribute shifts and structure shifts impact the GNN's accuracy on target graph in different ways. We use t-SNE to visualize the node representations on CSBM dataset under attribute shifts and structure shifts (homophily shifts). As shown in Figure \ref{fig:different_shift}, under attribute shift (c), although the node representations are shifted from the source graph, two classes are still mostly discriminative, which is similar to the case without distribution shifts (c). However, under homophily shift (d), the node representations for two classes mix together. These results match with our theoretical analysis in Propositions \ref{prop:attribute} and \ref{prop:structure}. 

\begin{figure}[h!]
    \centering
    \includegraphics[width=0.9\linewidth]{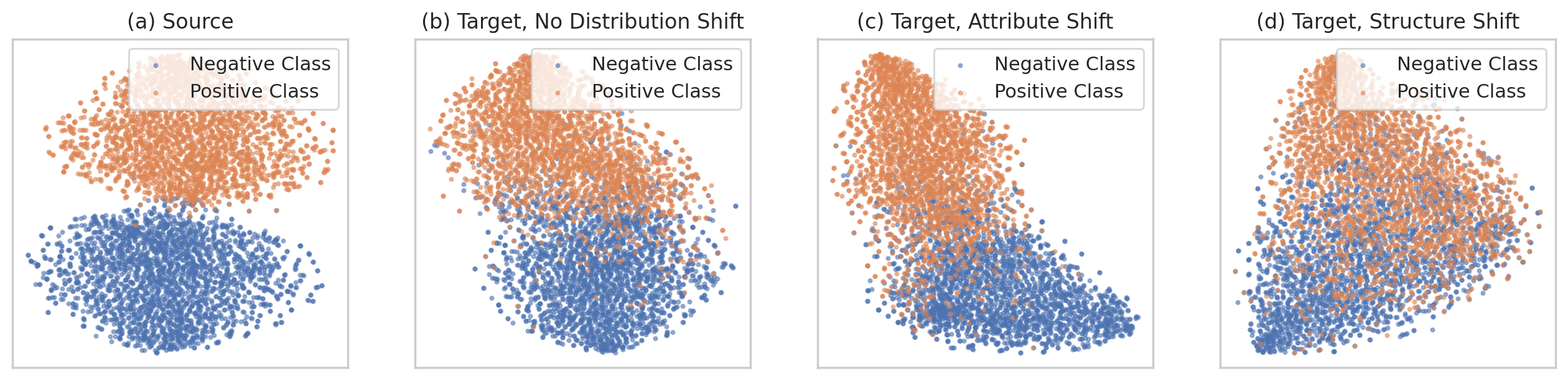}
    \caption{t-SNE visualization of node representations on CSBM dataset. }
    \label{fig:different_shift}
\end{figure}


\subsection{Robustness to noisy prediction} \label{appendix:noisy_prediction}

In {\algname}, representation quality and prediction accuracy mutually reinforce each other throughout the adaptation process. A natural question arises: if the model’s predictions contain significant noise before adaptation, can {\algname} still be effective? To address this, we conducted an empirical study on the CSBM dataset with severe homophily shift. We visualize the logits distribution for two classes of nodes in Figure \ref{fig:noisy_prediction}. 
\begin{itemize}
    \item Before adaptation, the predictions exhibit significant noise, with substantial overlap in the logits of two classes.
    \item However, as adaptation progresses, {\algname} is still able to gradually refine the node representations and improve accuracy.
\end{itemize}

\begin{figure}[h!]
    \centering
    \includegraphics[width=0.9\linewidth]{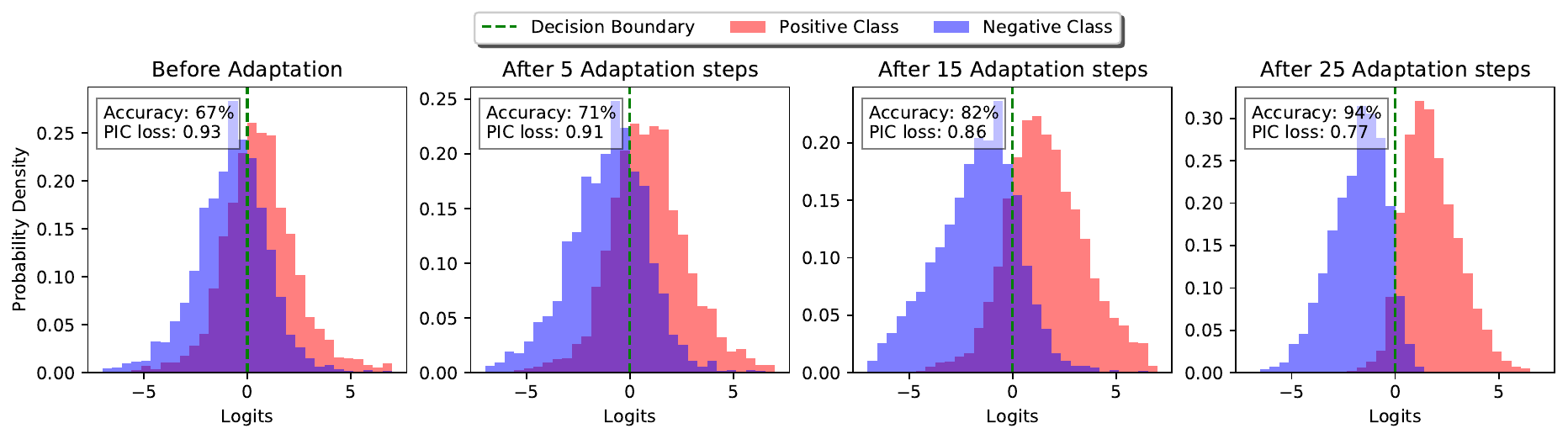}
    \vspace{-1ex}
    \caption{{\algname} improves accuracy even when the initial predictions are highly noisy}
    \label{fig:noisy_prediction}
    \vspace{-1ex}
\end{figure}


\newpage
\subsection{Different levels of structure shift} \label{appendix:levels_of_shift}

\rev{In the main text, we evaluated the performance of {\algname} under both homophily and degree shifts. In this section, we extend our evaluation by testing {\algname} across varying degrees of these structure shifts. For each scenario (e.g., homophily: homo $\to$ hetero, hetero $\to$ homo, and degree: high $\to$ low, low $\to$ high), we manipulate either the homophily or degree of the source graph while keeping the target graph fixed, thereby creating different levels of homophily or degree shifts. The larger the discrepancy between the source and target graphs in terms of homophily or degree, the greater the level of structure shift. For instance, a shift from 0.6 $\to$ 0.2 indicates training a model on a source graph with homophily 0.6 and evaluating it on a target graph with homophily 0.2. By comparison, a shift from 0.8 $\to$ 0.2 represents a more substantial homophily shift.}

\rev{The results of our experiments are summarized in Tables \ref{tab:homophily} and \ref{tab:degree}. Across all four settings, as the magnitude of the structure shift increases, the performance of GNNs trained using ERM declines significantly. However, under all settings, {\algname} consistently improves model performance. For example, in the homo $\to$ hetero setting, when the homophily gap increases from 0.4 (0.6 - 0.2) to 0.6 (0.8 - 0.2), the accuracy of ERM-trained models decreases by over 16\%, while the accuracy of models trained with {\algname} declines by less than 2\%. This demonstrates that {\algname} effectively mitigates the negative impact of structure shifts on GNNs.}

\begin{table}[h!]
    \centering
    \vspace{-1ex}
    \caption{Accuracy (mean $\pm$ s.d. \%) on CSBM under different levels of \textit{homophily shift}}
    \vspace{1ex}
    \scriptsize
        \begin{tabular}{lcccccc}
            \toprule
            \multirow{2}{*}{Method} & \multicolumn{3}{c}{homo $\to$ hetero} & \multicolumn{3}{c}{hetero $\to$ homo}  \\
            \cmidrule(lr){2-4} \cmidrule(lr){5-7}
            & 0.4 $\to$ 0.2 & 0.6 $\to$ 0.2 & 0.8 $\to$ 0.2 & 0.2 $\to$ 0.8 & 0.4 $\to$ 0.8 & 0.6 $\to$ 0.8 \\
            \midrule
            ERM             & \meansd{90.05}{0.15} & \meansd{82.51}{0.28} & \meansd{73.62}{0.44} 
                            & \meansd{76.72}{0.89} & \meansd{83.55}{0.50} & \meansd{89.34}{0.03} \\
            $+$ {\algname}  & \meansd{90.79}{0.17} & \meansd{89.55}{0.21} & \meansd{89.71}{0.27} 
                            & \meansd{90.68}{0.26} & \meansd{90.59}{0.24} & \meansd{91.14}{0.17} \\
            \bottomrule
        \end{tabular}
    \label{tab:homophily}
\end{table}


\begin{table}[h!]
    \centering
    \vspace{-1ex}
    \caption{Accuracy (mean $\pm$ s.d. \%) on CSBM under different levels of \textit{degree shift}}
    \vspace{1ex}
    \scriptsize
        \begin{tabular}{lcccccc}
            \toprule
            \multirow{2}{*}{Method} & \multicolumn{3}{c}{high $\to$ low} & \multicolumn{3}{c}{low $\to$ high}  \\
            \cmidrule(lr){2-4} \cmidrule(lr){5-7}
            & 5 $\to$ 2 & 10 $\to$ 2 & 20 $\to$ 2 & 2 $\to$ 20 & 5 $\to$ 20 & 10 $\to$ 20 \\
            \midrule
            ERM             & \meansd{88.67}{0.13} & \meansd{86.47}{0.38} & \meansd{85.55}{0.12} 
                            & \meansd{93.43}{0.37} & \meansd{95.35}{0.84} & \meansd{97.31}{0.36} \\
            $+$ {\algname}  & \meansd{88.78}{0.13} & \meansd{88.55}{0.44} & \meansd{88.10}{0.21} 
                            & \meansd{97.01}{1.00} & \meansd{97.24}{1.11} & \meansd{97.89}{0.25} \\
            \bottomrule
        \end{tabular}
    \label{tab:degree}
\end{table}


\subsection{Evolving target graphs} \label{appendix:evolve}

Our experiments in the main text mainly focused on adapting to a single target graph with a stable distribution. In this section, we test {\algname} on a stream of target graphs with evolving structure shifts. Specifically, we used the Syn-Cora dataset, where the model was pre-trained on a source graph with a homophily of 0.8. It was then sequentially adapted to five target graphs with homophilies of 0.1, 0.7, 0.3, 0.9, and 0.2, simulating continuously changing homophily. The experimental setup, except for the sequence of target graphs, was identical to that in the Syn-Cora experiments described in the main text. 

For example, in the column corresponding to the target graph with a homophily of 0.3, we compared two scenarios: (1) static: directly adapting from $h=0.8$ to $h = 0.3$, and (2) evolving: sequentially adapting through $(h=0.8) \to (h=0.1) \to (h=0.7) \to (h=0.3)$. 

\begin{table}[h!]
    \centering
    \vspace{-1ex}
    \caption{Accuracy (mean ± s.d. \%) on Syn-Cora with \textit{static} target graph and \textit{evolving} target graph}
    \vspace{1ex}
    \scriptsize
        \begin{tabular}{lcccccc}
            \toprule
            \multirow{2.5}{*}{Setting} & \multirow{2.5}{*}{Method} & \multicolumn{5}{c}{Target graph} \\
            \cmidrule(lr){3-7}
            & & $h=0.1$ & $h=0.7$ & $h=0.3$ & $h=0.9$ & $h=0.2$ \\
            \midrule
            Static & ERM & \meansd{63.19}{1.28} & \meansd{88.88}{0.61} & \meansd{71.46}{0.62} & \meansd{97.15}{0.32} & \meansd{65.67}{0.35} \\
            Static & {\algname}  & \meansd{79.75}{1.04} & \meansd{90.57}{0.47} & \meansd{79.68}{0.73} & \meansd{97.40}{0.28} & \meansd{78.96}{1.08} \\
            Evolving & {\algname} & \meansd{79.75}{1.04} & \meansd{90.65}{0.33} & \meansd{77.43}{0.62} & \meansd{97.31}{0.42} & \meansd{78.26}{1.02} \\
            \bottomrule
        \end{tabular}
    \label{tab:evolve}
\end{table}

The results, shown in Table \ref{tab:evolve} above, indicate that {\algname} achieves performance on evolving graphs highly comparable to that on static graphs. This demonstrates {\algname}’s ability to handle dynamic scenarios effectively, even under continuously changing graph structures.


\subsection{Robustness to additional adversarial shift} \label{appendix:adversarial}

\rev{While {\algname} primarily targets natural structure shifts, inspired by \citep{gtrans}, we test the robustness of {\algname} against adversarial attacks by applying the PR-BCD attack \citep{prbcd} on the target graph in our Syn-Cora experiments, varying the perturbation rate from 5\% to 20\%. The results are shown in Table \ref{tab:attack}. We found that while the accuracy of ERM dropped by 20.2\%, the performance of {\algname} only decreased by 2.3\%. This suggests that our algorithm has some robustness to adversarial attacks, possibly due to the overlap between adversarial attacks and structure shifts. Specifically, we observed a decrease in homophily in the target graph under adversarial attack, indicating a similarity to structure shifts.}

\begin{table}[h!]
    \centering
    \vspace{-1ex}
    \caption{Accuracy (\%) on Syn-Cora with additional adversarial shift}
    \vspace{1ex}
    \scriptsize
        \begin{tabular}{lccccc}
            \toprule
            Perturbation rate & No attack & 5\% & 10\% & 15\% & 20\% \\ 
            \midrule
            ERM               & 65.67     & 60.00   & 55.25    & 50.22   & 45.47 \\ 
            $+$ {\algname}    & 78.96     & 78.43   & 78.17    & 77.21   & 76.61 \\ 
            \midrule
            Homophily         & 0.2052    & 0.1923  & 0.1800   & 0.1690  & 0.1658 \\ 
            \bottomrule
        \end{tabular}
    \label{tab:attack}
\end{table}


\subsection{Ablation study with different loss functions} \label{appendix:ablation_loss}

We compare our proposed PIC loss with two existing surrogate losses: entropy \citep{tent} and pseudo-label \citep{shot}. While PIC loss use the ratio form of $\sigma_{\text{intra}}^2$ and $\sigma_{\text{inter}}^2$, we also compare it with a difference form $\sigma_{\text{intra}}^2 - \sigma_{\text{inter}}^2$, which also encourage larger $\sigma_{\text{inter}}^2$ and smaller $\sigma_{\text{intra}}^2$. The results are shown in Table \ref{tab:loss}: Our PIC loss has better performance under four structure shift scenarios. 

\begin{table}[h!]
    \centering
    \vspace{-1ex}
    \caption{Accuracy (mean $\pm$ s.d. \%) on CSBM with different losses.}
    \vspace{1ex}
    \scriptsize
        \begin{tabular}{lcccc}
            \toprule
            \multirow{2}{*}{Loss} & \multicolumn{2}{c}{Homophily shift} & \multicolumn{2}{c}{Degree shift}  \\
            \cmidrule(lr){2-3} \cmidrule(lr){4-5}  
            & homo $\to$ hetero & hetero $\to$ homo & high $\to$ low & low $\to$ high \\
            \midrule
            (None)          & \meansd{73.62}{0.44} & \meansd{76.72}{0.89} & \meansd{86.47}{0.38} & \meansd{92.92}{0.43} \\
            Entropy         & \meansd{75.89}{0.68} & \meansd{89.98}{0.23} & \meansd{86.81}{0.34} & \meansd{93.75}{0.72} \\
            PseudoLabel     & \meansd{77.29}{3.04} & \meansd{89.44}{0.22} & \meansd{86.72}{0.31} & \meansd{93.68}{0.69} \\
            $\sigma^2_{\text{intra}} - \sigma^2_{\text{inter}}$ & \meansd{76.10}{0.43} & \meansd{72.43}{0.65} & \meansd{82.56}{0.99} & \meansd{92.92}{0.44}   \\
            PIC (Ours)      & \bmeansd{89.71}{0.27} & \bmeansd{90.68}{0.26} & \bmeansd{88.55}{0.44} & \bmeansd{93.78}{0.74} \\
            \bottomrule
        \end{tabular}
    \label{tab:loss}
\end{table}


\subsection{Hyperparameter sensitivity with different number of GPR steps $K$} \label{appendix:hp_num_gprs}

Although the {\algname} does not involve any hyperparameters other than the learning rate $\eta$ and number of adaptation rounds $T$, it may be combined with GNN models with different dimension of $\vgamma$. Therefore in this part, we combine {\algname} with GPRGNN models using different $K$, i.e., the number of GPR steps, to test the robustness to different hyperparameter selection of the GNN model. Specifically, we tried values of $K$ ranging from 3 to 15 on Syn-Cora and Syn-Products datasets. Notice that in our experiments in \ref{sec:exp:main}, we use $K=9$. As shown in Table \ref{tab:gamma}, {\algname} remains effective under a wide range of $K$.

\begin{table}[h!]
    \centering
    \vspace{-1ex}
    \caption{Hyperparameter sensitivity of $K$}
    \vspace{1ex}
    \scriptsize
    \setlength{\tabcolsep}{1.0mm}{
        \begin{tabular}{llccccccc}
            \toprule
            \multirow{2}{*}{Dataset} & \multirow{2}{*}{Method} & \multicolumn{7}{c}{$K$}  \\
            \cmidrule(lr){3-9}
            & & 3 & 5 & 7 & 9 & 11 & 13 & 15 \\
            \midrule
            \multirow{2}{*}{Syn-Cora} & ERM & \meansd{64.18}{0.72} & \meansd{65.69}{0.88} & \meansd{66.01}{0.89} & \meansd{65.67}{0.35} & \meansd{65.36}{0.66} & \meansd{64.47}{1.54} & \meansd{64.91}{0.97}   \\
            & $+$ {\algname} & \meansd{81.35}{0.64} & \meansd{80.13}{0.59} & \meansd{79.50}{0.72} & \meansd{78.96}{1.08} & \meansd{78.42}{0.85} & \meansd{78.60}{0.81} & \meansd{77.92}{0.87} \\
            \midrule
            \multirow{2}{*}{Syn-Products} & ERM & \meansd{42.69}{1.03} & \meansd{41.86}{2.11} & \meansd{39.71}{2.75} & \meansd{37.52}{2.93} & \meansd{35.06}{2.27} & \meansd{33.17}{2.38} & \meansd{35.57}{0.55} \\
            & $+$ {\algname} & \meansd{72.09}{0.50} & \meansd{71.42}{0.65} & \meansd{70.58}{1.01} & \meansd{69.69}{1.06} & \meansd{69.48}{1.16} & \meansd{69.35}{0.66} & \meansd{69.72}{0.70} \\
            \bottomrule
        \end{tabular}
    }
    \label{tab:gamma}
\end{table}


\newpage
\subsection{Computation time} \label{appendix:time}

Due to the need to adapt hop-aggregation parameters $\vgamma$, {\algname} inevitably introduces additional computation costs, which vary depending on the chosen model, target graph, and base TTA algorithm. We documented the computation times for each component of ERM + {\algname} and T3A + {\algname} in our CSBM experiments:
\begin{itemize}
    \item \textit{Initial inference} involves the time required for the model’s first prediction on the target graph, including the computation of $0$-hop to $K$-hop representations $\{\mH^{(0)}, \cdots, \mH^{(K)}\}$, their aggregation into $\mZ = \sum_{k=0}^K \gamma_k \mH^{(k)}$, and prediction using a linear layer classifier. \textbf{This is also the time required for a direct prediction without any adaptation.} $\{\mH^{(0)}, \cdots, \mH^{(K)}\}$ is cached in the initial inference. 
    \item \textit{Adaptation (for each epoch)} accounts for the time required for each step of adaptation after the initial inference, and includes four stages:
    \begin{itemize}
        \item \textit{Forward pass} involves calculation of $\mZ$ using the current $\vgamma$ and cached $\{\mH^{(0)}, \cdots, \mH^{(K)}\}$, and prediction using the linear layer classifier (or with T3A algorithm). Since {\algname} only updates $\vgamma$, $\{\mH^{(0)}, \cdots, \mH^{(K)}\}$ can be cached without recomputation in each epoch. Note that other TTA algorithms could also adopt the same or similar caching strategies. 
        \item \textit{Computing PIC loss} involves calculating PIC loss using node representations $\mZ$ and the predictions $\hat\mY$.
        \item \textit{Back propagation} computes the gradients with respect to $\vgamma$. Similarly, as only $\vgamma$ is updated, there is no need for full GNN back propagation. 
        \item \textit{Updating parameters}, i.e., $\vgamma$, with the computed gradients. 
    \end{itemize}
\end{itemize}

\begin{table}[h!]
    \centering
    \vspace{-1ex}
    \caption{Computation time on CSBM}
    \vspace{1ex}
    \scriptsize
        \begin{tabular}{llcc}
            \toprule
            Method & Stage & Computation time (ms) & Additional computation time \\
            \midrule
            - & Initial Inference & \meansd{27.687}{0.413} & - \\
            \midrule
            GTrans & Adaptation (for each epoch) & \meansd{134.457}{2.478} & 485.63\%\\
            \midrule
            SOGA & Adaptation (for each epoch) & \meansd{68.500}{13.354} & 247.41\%\\
            \midrule
            \multirow{5}{*}{ERM $+$ {\algname}}
            & Adaptation (for each epoch) & \meansd{3.292}{0.254} & 11.89\%\\
            & \quad - Forward pass & \meansd{1.224}{0.131} & 4.42\%\\
            & \quad - Computing PIC loss & \meansd{0.765}{0.019} & 2.76\%\\
            & \quad - Back-propagation & \meansd{1.189}{0.131} & 4.30\%\\
            & \quad - Updating parameter & \meansd{0.113}{0.001} & 0.41\%\\
            \midrule
            \multirow{5}{*}{T3A $+$ {\algname}}
            & Adaptation (for each epoch) & \meansd{6.496}{0.333} & 23.46\%\\
            & \quad - Forward pass & \meansd{4.464}{0.248} & 16.12\%\\
            & \quad - Computing PIC loss & \meansd{0.743}{0.011} & 2.68\%\\
            & \quad - Back-propagation & \meansd{1.174}{0.167} & 4.24\%\\
            & \quad - Updating parameter & \meansd{0.115}{0.004} & 0.41\%\\
            \bottomrule 
        \end{tabular}
    \label{tab:time_breakdown}
\end{table}

We provide the computation time for each stage in Table \ref{tab:time_breakdown} above. While the initial inference time is 27.689 ms, each epoch of adaptation only introduce 3.292 ms (6.496 ms) additional computation time when combined with ERM (T3A), which is only 11.89\% (23.46\%) of the initial inference. This superior efficiency comes from (1) {\algname} only updating the hop-aggregation parameters and (2) the linear complexity of our PIC loss. 

We also compare the computation time of {\algname} with other graph TTA algorithms. A significant disparity is observed: while the computation time for each step of adaptation in other graph TTA algorithms is several times that of inference, the adaptation time of our algorithm is merely 1/9 (1/4) of the inference time, making it almost negligible in comparison. 


\newpage
\subsection{Scalability} \label{appendix:scalability}

In the end of Section \ref{sec:algo}, we show the computational complexity of {\algname} is linear to the number of nodes. To further validate this, we have conducted experiments on graphs of varying sizes from 1 million to 10 million nodes, and record the computation time for each epoch of adaptation. The results confirm that the computation time for {\algname} indeed scales linearly with graph size, demonstrating its efficiency even for very large graphs. 

\begin{figure}[h!]
    \centering
    \includegraphics[width=0.45\linewidth]{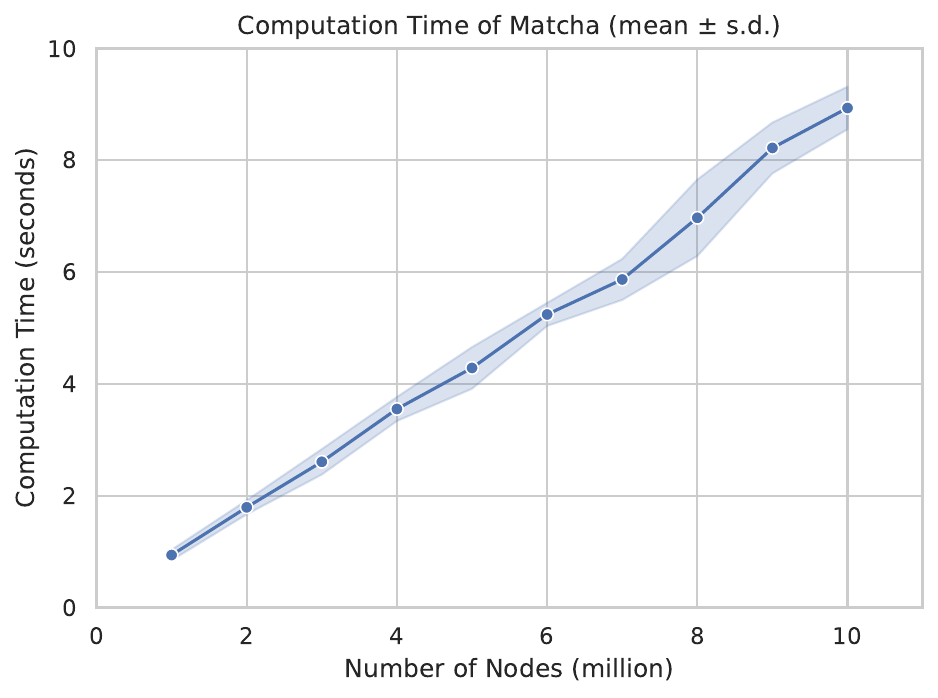}
    \caption{Scalability of {\algname}}
    \label{fig:scalability}
\end{figure}


\subsection{More architectures} \label{appendix:architecture}

Besides GPRGNN \citep{gprgnn}, our proposed {\algname} framework can also be integrated to more GNN architectures. We conduct experiments on Syn-cora dataset with three additional GNNs: APPNP \citep{appnp}, JKNet \citep{jk}, and GCNII \citep{gcnii}. 
\begin{itemize}
    \item For APPNP, we adapt the teleport probability $\alpha$. 
    \item For JKNet, we use weighted average as the layer aggregation, and adapt the weights for each intermediate representations. 
    \item For GCNII, we adapt the hyperparameter $\alpha_l$ for each layer. 
\end{itemize}

Notice that Tent can only be applied to models with batch normalization layers, which are not included for JKNet in GCNII in our implementation. 

\begin{table}[h!]
    \centering
    \vspace{-1ex}
    \caption{Accuracy (mean $\pm$ s.d.) on Syn-Cora with different GNN architectures}\label{tab:arch}
    \vspace{1ex}
    \scriptsize
    \begin{tabular}{lcccc}
        \toprule
        Method & GPRGNN & APPNP & JKNet & GCNII \\
        \midrule
        ERM             & \meansd{65.67}{0.35} & \meansd{70.24}{0.88} & \meansd{47.87}{0.90} & \meansd{67.95}{1.33} \\
        $+$ {\algname}       & \meansd{78.96}{1.08} & \meansd{80.63}{0.35} & \meansd{51.57}{2.09} & \meansd{74.33}{0.45} \\
        \midrule
        T3A             & \meansd{68.25}{1.10} & \meansd{70.98}{0.86} & \meansd{47.93}{0.85} & \meansd{68.20}{1.31} \\
        $+$ {\algname}       & \meansd{78.40}{1.04} & \meansd{80.70}{0.38} & \meansd{51.84}{1.87} & \meansd{74.96}{0.23} \\
        \midrule
        Tent            & \meansd{66.26}{0.38} & \meansd{70.15}{1.08} & -                    & -                    \\
        $+$ {\algname}       & \meansd{78.87}{1.07} & \meansd{80.72}{0.18} & -                    & -                    \\
        \midrule
        AdaNPC          & \meansd{67.34}{0.76} & \meansd{70.53}{0.76} & \meansd{47.93}{0.77} & \meansd{68.39}{1.18} \\
        $+$ {\algname}       & \meansd{77.45}{0.62} & \meansd{80.11}{0.61} & \meansd{48.32}{0.69} & \meansd{74.44}{0.35} \\
        \midrule
        GTrans          & \meansd{68.60}{0.32} & \meansd{73.50}{0.62} & \meansd{51.38}{0.58} & \meansd{74.08}{1.26} \\
        $+$ {\algname}       & \meansd{83.49}{0.78} & \meansd{85.17}{0.43} & \meansd{53.76}{2.26} & \meansd{80.50}{0.40} \\
        \midrule
        SOGA            & \meansd{67.16}{0.72} & \meansd{78.62}{0.48} & \meansd{47.96}{0.55} & \meansd{66.87}{1.50} \\
        $+$ {\algname}       & \meansd{79.03}{1.10} & \meansd{80.88}{0.56} & \meansd{52.05}{1.64} & \meansd{74.39}{0.29} \\
        \midrule
        GraphPatcher    & \meansd{63.01}{2.29} & \meansd{57.49}{1.83} & \meansd{45.38}{1.00} & \meansd{67.05}{1.54} \\
        $+$ {\algname}       & \meansd{80.99}{0.50} & \meansd{81.38}{0.88} & \meansd{46.78}{1.71} & \meansd{74.46}{0.50} \\
        \bottomrule
    \end{tabular}

\end{table}

The result are shown in Table \ref{tab:arch} above. Although different GNN architectures result in different performance on the target graph, {\algname} can consistently improve the accuracy. It shows that {\algname} is compatible with a wide range of GNN architectures. 

\newpage
\section{Reproducibility} \label{appendix:reproducibility}

In this section, we provide details on the datasets, model architecture, and experiment pipelines.

\subsection{Datasets}\label{appendix: rep_data}

We provide more details on the datasets used in the paper, including CSBM synthetic dataset\zc{remove this if we put CSBM in the main content} and real-world datasets (Syn-Cora~\citep{h2gcn}, Syn-Products~\citep{h2gcn}, Twitch-E~\citep{twitch}, and OGB-Arxiv~\citep{ogb}).

\begin{itemize}[leftmargin=*,topsep=0pt,itemsep=-1pt]
    \item CSBM~\citep{csbm}. We use $N = 5,000$ nodes on both source and target graph with $D=2,000$ features. Let $\vmu_+ = \frac{0.03}{\sqrt{D}} \cdot \boldsymbol{1}_{D}$, $\vmu_- = - \frac{0.03}{\sqrt{D}} \cdot \boldsymbol{1}_{D}$, and $\Delta \vmu = \frac{0.02}{\sqrt{D}} \cdot \boldsymbol{1}_{D}$. 
    \begin{itemize}[leftmargin=*,topsep=0pt,itemsep=-1pt]
        \item For homo $\leftrightarrow$ hetero, we conduct TTA between $\text{CSBM}(\vmu_+, \vmu_-, d=5, h=0.8)$ and $\text{CSBM}(\vmu_+, \vmu_-, d=5, h=0.2)$. 
        \item For low $\leftrightarrow$ high, we conduct TTA between $\text{CSBM}(\vmu_+, \vmu_-, d=2, h=0.8)$ and $\text{CSBM}(\vmu_+, \vmu_-, d=10, h=0.8)$. 
        \item When there are additional attribute shift, we use $\vmu_+, \vmu_-$ on the source graph, and replace them with $\vmu_+ + \Delta \vmu, \vmu_- + \Delta \vmu$ on the target graph. 
    \end{itemize}
    \item Syn-Cora~\citep{h2gcn} and Syn-Products~\citep{h2gcn} are widely used datasets to evaluate model's capability in handling homophly and heterophily. The Syn-Cora dataset is generated with various heterophily ratios based on modified preferential attachment process. Starting from an empty initial graph, new nodes are sequentially added into the graph to ensure the desired heterophily ratio. Node features are further generated by sampling node features from the corresponding class in the real-world Cora dataset. Syn-Products is generated in a similar way. For both dataset, we use $h=0.8$ as the source graph and $h=0.2$ as the target graph. We use non-overlapping train-test split over nodes on Syn-Cora to avoid label leakage. 
    \item Twitch-E~\citep{twitch} is a set of social networks, where nodes are Twitch users, and edges indicate friendships. Node attributes are the games liked, location and streaming habits of the user. We use `DE' as the source graph and `ENGB' as the target graph. We randomly drop a subset of homophily edges on the target graph to inject degree shift and homophily shift. 
    \item OGB-Arxiv~\citep{ogb} is a paper citation network of ARXIV papers, where nodes are ARXIV papers and edges are citations between these papers. Node attributes indicate the subject of each paper. We use a subgraph consisting of papers from 1950 to 2011 as the source graph, 2011 to 2014 as the validation graph, and 2014 to 2020 as the target graph. Similarly, we randomly drop a subset of homophily edges on the target graph to inject degree shift and homophily shift. 
\end{itemize}

\begin{table}[h!]
    \centering
    \caption{Statistics of datasets used in our experments}
    \vspace{1ex}
    \scriptsize
        \begin{tabular}{lccccccccc}
            \toprule
            Dataset & Partition & \#Nodes & \#Edges & \#Features & \#Classes & Avg. degree $d$ & Node homophily $h$ \\
            \midrule
            \multirow{3}{*}{Syn-Cora} & source & \multirow{3}{*}{1,490} & \multirow{3}{*}{2,968} & \multirow{3}{*}{1,433} & \multirow{3}{*}{5} & \multirow{3}{*}{3.98} & 0.8 \\
             & validation &&&&&& 0.4 \\
             & target &&&&&& 0.2 \\
            \midrule
            \multirow{3}{*}{Syn-Products} & source & \multirow{3}{*}{10,000} & \multirow{3}{*}{59,648} & \multirow{3}{*}{100} & \multirow{3}{*}{10} & \multirow{3}{*}{11.93} & 0.8 \\
             & validation &&&&&& 0.4 \\
             & target &&&&&& 0.2 \\
            \midrule
            \multirow{3}{*}{Twitch-E} & source & 9,498 & 76,569 & \multirow{3}{*}{3,170} & \multirow{3}{*}{2} & 16.12 & 0.529 \\
             & validation & 4,648 & 15,588 & & & 6.71 & 0.183 \\
             & target & 7,126 & 9,802 & & & 2.75 & 0.139 \\
             \midrule
            \multirow{3}{*}{OGB-Arxiv} & source & 17,401 & 15,830 & \multirow{3}{*}{128} & \multirow{3}{*}{40} & 1.82 & 0.383 \\
             & validation & 41,125 & 18,436 & & & 0.90 & 0.088 \\
             & target & 169,343 & 251,410 & & & 2.97 & 0.130 \\
            \bottomrule 
        \end{tabular}
    \label{tab:stat}
\end{table}

\newpage

\subsection{Model architecture}\label{appendix: rep_model}

\begin{itemize}[leftmargin=*,topsep=0pt,itemsep=-1pt]
    \item For CSBM, Syn-Cora, Syn-Products, we use GPRGNN with $K = 9$. The featurizer is a linear layer, followed by a batchnorm layer, and then the GPR module. The classifier is a linear layer. The dimension for representation is 32. 
    \item For Twitch-E and OGB-Arxiv, we use GPRGNN with $K = 5$. The dimension for representation is 8 and 128, respectively. 
    \item More architectures. For APPNP, we use similar structure as the GPRGNN, while we adapt the $\alpha$ for the personalized pagerank module. For JKNet, we use 2 layers with 32-dimension hidden representation. We adapt the combination layer. For GCNII, we use 4 layers with 32-dimension hidden representation, and adapt the $\alpha_\ell$ for each layer. 
\end{itemize}

\subsection{Compute resources}
We use single Nvidia Tesla V100 with 32GB memory. However, for the majority of our experiments, the memory usage should not exceed 8GB. We switch to Intel(R) Xeon(R) Gold 6240R CPU @ 2.40GHz when recording the computation time.

\newpage
\section{More discussion} \label{appendix:discussion}

\subsection{Additional related works} \label{appendix:related_works}

Graph neural networks (GNNs) have shown great success in various graph applications \citep{gcn,gat,twitch,qiu2022dimes,qiu2023ditto,ogb,wang2023networked,elliptic}. In Section \ref{sec:related_works}, we briefly introduced selected related works in test-time adaptation and graph domain adaptation. Here with discuss more related works in test-time adaptation, graph out-of-distribution generalization, and homophily-adaptive GNN models. 

\paragraph{More test-time adaptation} Another important category of TTA algorithms, beyond those discussed in the main text, leverages data augmentation \citep{aug_survey,graphaug_tutorial}. These methods apply multiple augmentations to the input and enhance prediction robustness through techniques such as ensembling \citep{augmix} or minimizing marginal entropy \citep{memo}. In the context of graphs, augmentation can also be utilized for test-time adaptation \citep{gasoline,bat,zenggraph}. For instance, GTrans \citep{gtrans} and DropEdge \citep{dropedge} employ augmentation strategies for this purpose. GTrans adopts DropEdge as an augmentation technique and incorporates a contrastive loss that maximizes the similarity between original nodes and their augmented views while penalizing their similarity to negative samples. However, augmentation-based approaches inevitably introduce significant additional computational costs. 

\textbf{Graph out-of-distribution generalization} (graph OOD) aims to train a GNN model on the source graph that performs well on the target graph with unknown distribution shifts~\citep{graphood_survey}. Existing graph OOD methods improve the model generalization by manipulating the source graph~\citep{mh-aug,kdga}, designing disentangled~\citep{disengcn,factorgcn,yan2024pacer,zeng2023generative} or casuality-based~\citep{ood-gnn,stablegnn} models, and exploiting various learning strategies~\citep{gin,srgnn}. However, graph OOD methods focus on learning a universal model on source and target graphs, while not addressing model adaption to a specific target graph.

\paragraph{Homophily and heterophily} Most GNN models follow the homophily assumption that neighboring nodes tend to share similar labels~\citep{gcn,gat,yan2024thegcn}. Various message-passing~\citep{jacobiconv,cpgnn,xuslog,yan2024trainable} and aggregation~\citep{gprgnn,fagcn,h2gcn,jk,xu_homophily} paradigms have been proposed to extend GNN models to heterophilic graphs. These GNN structures often embrace additional parameters, e.g., the aggregation weights for GPRGNN~\citep{gprgnn} and H2GCN~\citep{h2gcn}, to handle both homophilic and heterophilic graphs. Such parameters provide the flexibility we need to adapt models to shifted graphs.
However, these methods focus on the model design to handle either homophilic or heterophilic graph, without considering distribution shifts.

\subsection{Limitations} \label{appendix:limitation}

\paragraph{Assumption on source model}
Since we mainly focus on the challenge of distribution shifts. Our proposed algorithm assumes that the source model should be able to learn class-clustered representations on the source graph, and should generalize well when there are no distribution shifts. In applications with extremely low signal-to-noise ratio, our algorithm's improvement in accuracy might not be guaranteed. However, we would like to point out that this is a challenge faced by almost all TTA algorithms \citep{pitfall}. 

\paragraph{Computational efficiency and scalability}
Our proposed algorithm introduce additional computational overhead during testing. However, we quantify the additional computation time: it is minimal compared to the GNN inference time. Also, {\algname} is much more efficient that other graph TTA methods.

\subsection{Broader impacts} \label{appendix:impact}

Our paper is foundational research related to test-time adaptation on graph data. It focus on node classification as an existing task. We believe that there are no additional societal consequence that must be specifically highlighted here.

\end{document}